\def\eqref#1{equation~\ref{#1}}
\def\1{\bm{1}}
\DeclareMathAlphabet{\mathsfit}{\encodingdefault}{\sfdefault}{m}{sl}
\SetMathAlphabet{\mathsfit}{bold}{\encodingdefault}{\sfdefault}{bx}{n}
\newif\ifcomments
\newcommand\emma[1]{\textcolor{blue}{\textbf{EP}: #1}}
\newcommand\nikhil[1]{\textcolor{teal}{\textbf{NG}: #1}}
    \newcommand\emma[1]{}
    \newcommand\nikhil[1]{}
\newcommand\bbE{\ensuremath{\mathbb{E}}}
\newcommand\betaY{\ensuremath{\boldsymbol{\beta_Y}}}
\newcommand\betadelta{\ensuremath{\boldsymbol{\beta_\Delta}}}
\newcommand\betaT{\ensuremath{\boldsymbol{\beta_T}}}
\newcommand\thetacon{\ensuremath{\theta_{\text{con}}}}
\newcommand\thetaunc{\ensuremath{\theta_{\text{unc}}}}
\newcommand\capXi{\ensuremath{X_i}}
\newcommand\capX{\ensuremath{X}}
\newcommand\numfeatures{\ensuremath{D}}
\newcommand\tildebetaT{\ensuremath{\tilde{\boldsymbol{\beta}}_{\boldsymbol{T}}}}
\newcommand\tildebetaY{\ensuremath{\tilde{\boldsymbol{\beta}}_{\boldsymbol{Y}}}}
\title{Domain constraints improve risk prediction when outcome data is missing}
\author{%
Sidhika Balachandar~\thanks{Correspondence to: \texttt{sidhikab@cs.cornell.edu}} \\
  Cornell Tech
  \And
  Nikhil Garg \\
  Cornell Tech
  \And
  Emma Pierson \\
  Cornell Tech
}
\begin{document}

\maketitle

\begin{abstract}
Machine learning models are often trained to predict the outcome resulting from a human decision. For example, if a doctor decides to test a patient for disease, will the patient test positive? A challenge is that historical decision-making determines whether the outcome is observed: we only observe test outcomes for patients doctors historically tested. Untested patients, for whom outcomes are unobserved, may differ from tested patients along observed and unobserved dimensions. We propose a Bayesian model class which captures this setting. The purpose of the model is to accurately estimate risk for both tested and untested patients. Estimating this model is challenging due to the wide range of possibilities for untested patients. To address this, we propose two domain constraints which are plausible in health settings: a \emph{prevalence constraint}, where the overall disease prevalence is known, and an \emph{expertise constraint}, where the human decision-maker deviates from purely risk-based decision-making only along a constrained feature set. We show theoretically and on synthetic data that domain constraints improve parameter inference. We apply our model to a case study of cancer risk prediction, showing that the model's inferred risk predicts cancer diagnoses, its inferred testing policy captures known public health policies, and it can identify suboptimalities in test allocation. Though our case study is in healthcare, our analysis reveals a general class of domain constraints which can improve model estimation in many settings.
\end{abstract}

\section{Introduction}
Machine learning models are often trained to predict outcomes in settings where a human makes a high-stakes decision. In criminal justice, a judge decides whether to release a defendant prior to trial, and models are trained to predict whether the defendant will fail to appear or commit a crime if released~\citep{lakkaraju2017selective,jung2020simple,kleinberg2018human}. In lending, a creditor decides whether to grant an applicant a loan, and models are trained to predict whether the applicant will repay~\citep{bjorkegren2020behavior,crook2004does}. In healthcare---the setting we focus on in this paper---a doctor decides whether to test a patient for disease, and models are trained to predict whether the patient will test positive~\citep{jehi2020individualizing,mcdonald2021derivation,mullainathan2022diagnosing}. 
Machine learning predictions help guide decision-making in all these settings. A model which predicts a patient's risk of disease can help allocate tests to the highest-risk patients, and also identify suboptimalities in human decision-making: for example, testing patients at low risk of disease, or failing to test high risk patients~\citep{mullainathan2022diagnosing}. 

A fundamental challenge in all these settings is that historical decision-making determines whether the outcome is observed. In criminal justice, release outcomes are only observed for defendants judges have historically released. In lending, loan repayments are only observed for applicants historically granted loans. In healthcare, test outcomes are only observed for patients doctors have historically tested. This is problematic because the model must make accurate predictions for the entire population, not just the historically tested population. Learning only from the tested population also risks introducing bias against underserved populations who are less likely to get medical tests partly due to worse healthcare access~\citep{chen2021ethical,pierson2020assessing,servik2020huge,jain2023antiracist}. Thus, there is a challenging distribution shift between the tested and untested populations. The two populations may differ both along \emph{observables} recorded in the data and \emph{unobservables} known to the human decision-maker but unrecorded in the data. For example, tested patients may have more symptoms recorded than untested patients---but they may also differ on unobservables, like how much pain they are in or how sick they look, which are known to the doctor but are not available for the model. 
This setting, referred to as the \emph{selective labels} setting~\citep{lakkaraju2017selective}, occurs in high-stakes domains including healthcare, hiring, insurance, lending, education, welfare services, government inspections, tax auditing, recommender systems, wildlife protection, and criminal justice and has been the subject of substantial academic interest (see \S \ref{sec:related_work} for related work).

Without further constraints on the data generating process, there is a wide range of possibilities for the untested patients. They could all have the disease or never have the disease. However, selective labels settings often have \emph{domain-specific constraints} which would allow us to limit the range of possibilities. For example, in medical settings, we might know the prevalence of a disease in the population. Recent distribution shift literature has shown that generic methods generally do not perform well across all distribution shifts and that domain-specific constraints can improve generalization~\citep{gulrajani2020search,koh2021wilds,sagawa2021extending,gao2023out,kaur2022modeling,tellez2019quantifying,wiles2021fine}. This suggests the utility of domain constraints in improving generalization from the tested to untested population.

Motivated by this reasoning, we make the following contributions: 
\begin{enumerate}
\item We propose a Bayesian model class which captures the selective labels setting and nests classic econometric models. We model a patient's risk of disease as a function of observables and unobservables. The probability of testing a patient increases with disease risk and other factors (e.g., bias). The purpose of the model is to accurately estimate risk for both the tested and untested patients and to quantify deviations from purely risk-based test allocation. 
\item We propose two constraints informed by the medical domain to improve model estimation: a \emph{prevalence constraint}, where disease prevalence is known, and an \emph{expertise constraint}, where the decision-maker deviates from risk-based decision-making along a constrained feature set. We show theoretically and on synthetic data that the constraints improve inference.
\item We apply our model to a breast cancer risk prediction case study. We conduct a suite of validations, showing that the model's
(i) inferred risks predict cancer diagnoses, (ii) inferred unobservables correlate with known unobservables, (iii) inferred predictors of cancer risk correlate with known predictors, and (iv) inferred testing policy correlates with public health policies. We also show that our model identifies deviations from risk-based test allocation and that the prevalence constraint increases the plausibility of inferences. 
\end{enumerate} 

Though our case study is in healthcare, our analysis reveals a general class of domain constraints which can improve model estimation in many selective labels settings. 
\section
{Model}
\label{sec:model}
We now describe our Bayesian model class. Following previous work~\citep{mullainathan2022diagnosing}, our underlying assumption is that whether a patient is tested for a disease should be determined primarily by their risk of disease. Thus, the purpose of the model is to accurately estimate risk for both the tested and untested patients and to quantify deviations from purely risk-based test allocation. The latter task relates to literature on diagnosing factors affecting human decision-making~\citep{mullainathan2022diagnosing,zamfirescu2022trucks,jung2018omitted}. 

Consider a set of people indexed by $i$. For each person, we see observed features $\capXi\in\mathbb{R}^\numfeatures$ (e.g., demographics and symptoms in an electronic health record). We observe a \textit{testing decision} $T_i\in\{0,1\}$, where $T_i=1$ indicates that the $i$th person was tested. If the person was tested $(T_i=1)$, we observe an outcome $Y_i$. $Y_i$ might be a binary indicator (e.g. $Y_i=1$ means that the person tests positive), or $Y_i$ might be a numeric outcome of a medical test (e.g. T cell count or oxygen saturation levels). Throughout, we generally refer to $Y_i$ as a binary indicator, but our framework extends to non-binary $Y_i$, and we derive our theoretical results in this setting. If $T_i=0$ we do not observe $Y_i$.

There are \emph{unobservables}~\citep{angrist2009mostly, rambachan2022counterfactual}, denoted by $Z_i \in\mathbb{R}$, that affect \textit{both} $T_i$ and $Y_i$ but are not recorded in the dataset -- e.g., whether the doctor observes that the person is in pain. Consequently, the risk of the tested population differs from the untested population even conditional on observables $\capXi$: i.e. $p(Y_i | T_i=1, \capXi) \neq p(Y_i | T_i=0, \capXi)$. 

A person's risk of disease is captured by their \emph{risk score} $r_i \in \mathbb{R}$, which is a function of $X_i$ and $Z_i$. Whether the person is tested ($T_i = 1$) depends on their risk score $r_i$, but also factors like screening policies or socioeconomic disparities. More formally, our data generating process is
\begin{align}
\label{eq:DGP}
\begin{split}
\text{Unobservables:} & \quad Z_i \sim f(\cdot|\sigma^2)\\
\text{Risk score:} & \quad r_i = \capXi^T\betaY+Z_i\\
\text{Test outcome:} & \quad Y_i\sim h_Y(\cdot | r_i)\\
\text{Testing decision:} & \quad T_i\sim h_T(\cdot | \alpha r_i+\capXi^T\betadelta)\, .
\end{split}
\end{align}
\begin{wrapfigure}{r}{0.4\textwidth}
\vspace{-2.5em}
  \begin{center}
    \includegraphics[width=0.38\textwidth]{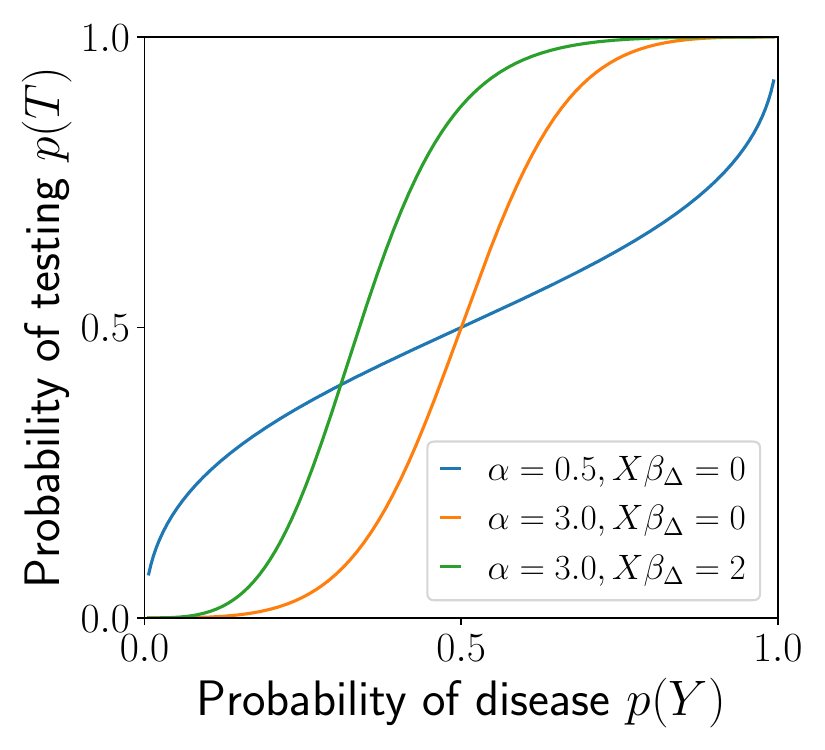}
  \end{center}
  \caption{Effect of $\alpha$ and $\capX\betadelta$: $\alpha$ controls how steeply testing probability $p(T_i)$ increases in disease risk $p(Y_i)$, while $\capX\betadelta$ captures factors which affect $p(T_i)$ when controlling for $p(Y_i)$.}
  \label{fig:illustrative_fig}
  \vspace{-2em}
\end{wrapfigure}
In words, $Z_i$ is drawn from a distribution $f$ with parameter $\sigma^2$, which captures the relative importance of the unobserved versus observed features. The disease risk score $r_i\in \mathbb{R}$ is modeled as a linear function of observed features (with unknown coefficients $\betaY\in\mathbb{R}^\numfeatures$) and the unobserved $Z_i$. $Y_i$ is drawn from a distribution $h_Y$ parameterized by $r_i$ -- e.g., $Y_i\sim \textrm{Bernoulli}(\text{sigmoid}(r_i))$. Analogously, the testing decision $T_i$ is drawn from a distribution $h_T$ parameterized by a linear function of the true disease risk score and other factors, with unknown coefficients $\alpha\in\mathbb{R}$ and $\betadelta\in\mathbb{R}^\numfeatures$. Because $T_i$ depends on $r_i$, and $r_i$ is a function of $Z_i$, $T_i$ depends on $Z_i$. Figure \ref{fig:illustrative_fig} illustrates the effect of $\alpha$ and  $\betadelta$. A larger $\alpha$ indicates that testing probability increases more steeply in risk. $\betadelta$ captures human or policy factors which affect a patient's probability of being tested beyond disease risk. In other words, $\betadelta$ captures deviations from purely risk-based test allocation. Putting things together, the model parameters are $\theta \triangleq (\alpha, \sigma^2, \betadelta, \betaY)$. 
\paragraph{Medical domain knowledge:} Besides the observed data, in medical settings we often have constraints to aid model estimation. We consider two constraints.
\begin{itemize}[leftmargin=2em]
\item \textbf{Prevalence constraint:} The average value of $Y$ across the entire population is known ($\mathbb{E}[Y]$). When $Y$ is a binary indicator of whether a patient has a disease, this corresponds to assuming that the \emph{disease prevalence} is known. This assumption is plausible because estimating prevalence has been the focus of substantial public health research, and estimates thus exist in many medical settings; for more details see appendix~\ref{sec:appendix_prevalence_constraint}. For example, this information is available for cancer~\citep{cr_prevalence_stats}, COVID-19~\citep{covid_19_prevalence}, and heart disease~\citep{centers2007prevalence}. In some cases, the prevalence is only \emph{approximately} known~\citep{manski2021estimating, manski2020bounding, mullahy2021embracing}; our Bayesian formulation can incorporate such soft constraints as well. 
\item \textbf{Expertise constraint:} Because doctors and patients are informed decision-makers, we can assume that tests are allocated \textit{mostly} based on disease risk. Specifically, we assume that there are some features which do not affect a patient's probability of receiving a test when controlling for their risk: i.e., that ${\betadelta}_{d} = 0$, for at least one dimension $d$. For example, we may assume that when controlling for disease risk, a patient's height does not affect their probability of being tested for cancer, and thus ${\betadelta}_{\text{height}}=0$.
\end{itemize}
\section{Theoretical Analysis}
\label{sec:theory}
In this section, we prove why our proposed constraints improve parameter inference by analyzing a special case of our general model in \eqref{eq:DGP}. In Proposition \ref{heckman}, we show that this special case is equivalent to the Heckman model~\citep{heckman1976common,heckman1979sample}, which is used to correct bias from non-randomly selected samples. In Proposition \ref{identifiability}, we analyze this model to show that constraints can improve the precision of parameter inference. The full proofs are in Appendix \ref{sec:appendix_proofs}. In Sections \ref{sec:synthetic_experiments} and \ref{sec:real_data_experiments} we empirically generalize our theoretical results beyond the special Heckman case.
\subsection{Domain constraints can improve the precision of parameter inference}
\label{sec:heckman_model}
We start by defining the Heckman model and showing it is a special case of our general model.
\begin{restatable}[Heckman correction model]{definition}{heckmandef}
\label{heckman_model}
The Heckman model can be written in the following form~\citep{heckman_model}:
\begin{align}
\begin{split}
 T_i &= \mathbbm{1}[\capXi^T\tildebetaT + u_i > 0]\\
 Y_i &= \capXi^T\tildebetaY + Z_i\\
  \begin{bmatrix}
           u_i \\
           Z_i
         \end{bmatrix} &\sim \text{Normal}\bigg(\begin{bmatrix}
           0 \\
           0
         \end{bmatrix}, 
         \begin{bmatrix}
           1&\tilde{\rho} \\
           \tilde{\rho}&\tilde{\sigma}^2
         \end{bmatrix}
         \bigg)\, .
\end{split}
 \label{eq:properheckmanmodel}
\end{align}
\end{restatable}
\begin{restatable}[]{proposition}{heckman}
\label{heckman}
The Heckman model (Definition \ref{heckman_model}) is equivalent to the following special case of the general model in \eqref{eq:DGP}:
\begin{align}
\begin{split}
Z_i &\sim \mathcal
{N}(0, \sigma^2)\\
r_i &= \capXi^T\betaY+Z_i\\
Y_i&=r_i\\
T_i&\sim \text{Bernoulli}(\Phi(\alpha r_i+\capXi^T\betadelta))\, .
\end{split}
\label{eq:heckmanmodel}
\end{align}
\end{restatable}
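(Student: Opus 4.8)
The plan is to exhibit an explicit, invertible change of variables between the Heckman parameters $(\tildebetaT, \tildebetaY, \tilde{\rho}, \tilde{\sigma}^2)$ and the special-case parameters $(\alpha, \sigma^2, \betadelta, \betaY)$ under which the two data-generating processes induce the same joint law over $(\capXi, Z_i, T_i, Y_i)$, treating the marginal law of $\capXi$ as common and unmodeled. Since both models leave $\capXi$ untouched, it suffices to match three things: the marginal of the latent $Z_i$, the conditional law of $Y_i$ given $(\capXi, Z_i)$, and the conditional law of $T_i$ given $(\capXi, Z_i)$. Once these three agree, the full augmented joint laws coincide, and a fortiori so do the induced laws over the observables $(\capXi, T_i, Y_i)$, which is the claimed equivalence.

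The first two matches are immediate. Both models draw $Z_i$ as a mean-zero normal, so I set $\sigma^2 = \tilde{\sigma}^2$. Both outcome equations are deterministic given $(\capXi, Z_i)$: the special case has $Y_i = r_i = \capXi^T\betaY + Z_i$ and Heckman has $Y_i = \capXi^T\tildebetaY + Z_i$, so I set $\betaY = \tildebetaY$.

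The substantive step is matching the testing decision. In the special case, $T_i \mid (\capXi, Z_i)$ is Bernoulli with success probability $\Phi(\alpha r_i + \capXi^T\betadelta) = \Phi\big(\alpha Z_i + \capXi^T(\alpha\betaY + \betadelta)\big)$, already an affine function of $(\capXi, Z_i)$ pushed through $\Phi$. For the Heckman model I would compute the same conditional probability by exploiting the joint normality of $(u_i, Z_i)$: conditioning on $Z_i$ gives $u_i \mid Z_i \sim \text{Normal}\big(\tfrac{\tilde{\rho}}{\tilde{\sigma}^2} Z_i,\, 1 - \tfrac{\tilde{\rho}^2}{\tilde{\sigma}^2}\big)$, so that
\begin{equation}
P(T_i = 1 \mid \capXi, Z_i) = P(u_i > -\capXi^T\tildebetaT \mid Z_i) = \Phi\!\left(\frac{\capXi^T\tildebetaT + (\tilde{\rho}/\tilde{\sigma}^2)\,Z_i}{\sqrt{1 - \tilde{\rho}^2/\tilde{\sigma}^2}}\right),
\end{equation}
using the symmetry $1 - \Phi(-a) = \Phi(a)$. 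Equating the two arguments of $\Phi$ as functions of $(\capXi, Z_i)$ and matching the $Z_i$-coefficient and the $\capXi$-coefficient separately yields $\alpha = \tilde{\rho}\,/\big(\tilde{\sigma}\sqrt{\tilde{\sigma}^2 - \tilde{\rho}^2}\big)$ and $\betadelta = \tildebetaT/\sqrt{1 - \tilde{\rho}^2/\tilde{\sigma}^2} - \alpha\betaY$.

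The main obstacle, and really the only place care is needed, is verifying that this map is a genuine bijection between the two parameter domains, so that equivalence holds in both directions rather than just one. The positive-definiteness of the Heckman covariance matrix is exactly the condition $\tilde{\sigma}^2 - \tilde{\rho}^2 > 0$, which keeps the conditional variance positive and the expression for $\alpha$ well-defined; conversely, for fixed $\sigma^2 = \tilde{\sigma}^2$ the formula for $\alpha$ is a strictly increasing function of $\tilde{\rho}$ on $(-\tilde{\sigma}, \tilde{\sigma})$ sweeping out all of $\mathbb{R}$, so it inverts uniquely, after which $\tildebetaT$ is recovered from $\betadelta$ via the affine relation above. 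I would close by remarking that the independent component of $u_i$ (orthogonal to $Z_i$) is precisely what supplies the Bernoulli randomness of $T_i$ in the special case, which is why a deterministic threshold-crossing rule and a probit Bernoulli draw generate the same conditional law.
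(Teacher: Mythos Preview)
Your proof is correct and arrives at the same parameter mapping as the paper, but you work in the opposite direction. The paper starts from the special case, invokes the latent-variable formulation of the probit link to introduce an independent noise term $\epsilon_i \sim \mathcal{N}(0,1)$, and then rescales $\alpha Z_i + \epsilon_i$ by its standard deviation $\sqrt{\alpha^2\sigma^2+1}$ to obtain the unit-variance Heckman error $u_i$; a direct covariance computation then yields $\tilde{\rho} = \alpha\sigma^2/\sqrt{\alpha^2\sigma^2+1}$ and $\tildebetaT = \betaT/\sqrt{\alpha^2\sigma^2+1}$. You instead start from the Heckman side and condition $u_i$ on $Z_i$ to read off $P(T_i=1\mid\capXi,Z_i)$ directly as a probit probability, then match coefficients inside $\Phi$. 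The two are dual manipulations of the same bivariate normal, and your formulas are algebraically equivalent to the paper's after solving for $\alpha$ and $\betadelta$. Your treatment is somewhat more explicit about invertibility: the paper remarks only briefly (after the proof) that the equation for $\tilde{\rho}$ can be solved uniquely for $\alpha$ under the standing assumption $\alpha>0$, whereas you verify monotonicity of $\tilde{\rho}\mapsto\alpha$ on the full interval $(-\tilde{\sigma},\tilde{\sigma})$ and note that positive-definiteness of the Heckman covariance is exactly what keeps the conditional variance positive.
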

It is known that the Heckman model is identifiable~\citep{lewbel2019identification}, and thus the special case of our model is identifiable (i.e., distinct parameter sets correspond to distinct observed expectations) without further constraints. However, past work has often placed constraints on the Heckman model (though different constraints from those we propose) to improve parameter inference. Without constraints, the model is only weakly identified by functional form assumptions~\citep{lewbel2019identification}. This suggests that our proposed constraints could also improve model estimation. In Proposition \ref{identifiability}, we make this intuition precise by showing that our proposed constraints improve the \textit{precision} of the parameter estimates as measured by the \textit{variance} of the parameter posteriors. 

In our Bayesian formulation, we estimate a posterior distribution for parameter $\theta$ given the observed data: $g(\theta) \triangleq p(\theta|\capX, T, Y)$. Let $\textrm{Var}(\theta)$ denote the variance of $g(\theta)$. We show that constraining the value of any one parameter \textit{will not worsen} the precision with which other parameters are inferred. In particular, constraining a parameter $\thetacon$ to a value drawn from its posterior distribution will not in expectation increase the posterior variance of any other unconstrained parameters $\thetaunc$. To formalize this, we define the \emph{expected conditional variance}:
\begin{restatable}[Expected conditional variance]{definition}{conditionalvariance}
\label{conditionalvariance}
Let the distribution over model parameters $g(\theta) \triangleq p(\theta|\capX, T, Y)$ be the posterior distribution of the parameters $\theta$ given the observed data $\{\capX, T, Y\}$. We define the expected conditional variance of an unconstrained parameter $\thetaunc$, conditioned on the value of a constrained parameter $\thetacon$, to be $\mathbb{E}[\textrm{Var}(\thetaunc|\thetacon)] \triangleq \mathbb{E}_{\thetacon^* \sim g}[\textrm{Var}(\thetaunc|\thetacon=\thetacon^*)]$.
\end{restatable}
\begin{restatable}[]{proposition}{identifiability}
\label{identifiability}
In expectation, constraining the parameter $\thetacon$ does not increase the variance of any other parameter $\thetaunc$. In other words, $\mathbb{E}[\textrm{Var}(\thetaunc|\thetacon)] \leq \textrm{Var}(\thetaunc)$. Moreover, the inequality is strict as long as $\mathbb{E}[\thetaunc|\thetacon]$ is non-constant in $\thetacon$ (i.e., $\textrm{Var}(\bbE[\thetaunc|\thetacon])>0$).
\end{restatable}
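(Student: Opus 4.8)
The plan is to recognize the claim as a direct consequence of the law of total variance applied to the joint posterior $g(\theta)$. Under $g$, I would treat $\thetacon$ and $\thetaunc$ as jointly distributed random variables, so that the outer average $\E_{\thetacon^* \sim g}[\cdot]$ appearing in Definition \ref{conditionalvariance} is precisely the expectation over the marginal posterior of $\thetacon$. With this reading, $\E[\Var(\thetaunc \mid \thetacon)]$ is exactly the inner (conditional-variance) term of the standard variance decomposition, and the whole proposition reduces to non-negativity of a variance.

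First I would state, and if desired verify, the law of total variance for these two posterior marginals:
\begin{equation}
\Var(\thetaunc) = \E\big[\Var(\thetaunc \mid \thetacon)\big] + \Var\big(\E[\thetaunc \mid \thetacon]\big).
\end{equation}
This follows by expanding each term through $\Var(W) = \E[W^2] - (\E[W])^2$ and applying the tower property $\E[\E[\thetaunc \mid \thetacon]] = \E[\thetaunc]$, after which the cross terms cancel. Rearranging yields
\begin{equation}
\E\big[\Var(\thetaunc \mid \thetacon)\big] = \Var(\thetaunc) - \Var\big(\E[\thetaunc \mid \thetacon]\big).
\end{equation}

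Next I would conclude. Since $\Var(\E[\thetaunc \mid \thetacon]) \ge 0$ as the variance of a real-valued random variable, we immediately get $\E[\Var(\thetaunc \mid \thetacon)] \le \Var(\thetaunc)$, which is the weak inequality. Equality holds if and only if $\Var(\E[\thetaunc \mid \thetacon]) = 0$, i.e. if and only if $\E[\thetaunc \mid \thetacon]$ is almost surely constant in $\thetacon$; hence the inequality is strict exactly when $\E[\thetaunc \mid \thetacon]$ is non-constant, matching the stated condition.

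The derivation is essentially a one-line invocation of a standard identity, so there is no substantive mathematical obstacle. The only point I would be explicit about — and the place where care is genuinely needed — is the measure-theoretic bookkeeping in Definition \ref{conditionalvariance}: confirming that averaging $\Var(\thetaunc \mid \thetacon = \thetacon^*)$ over $\thetacon^* \sim g$ coincides with the conditional-variance term of the decomposition, so that the quantity the proposition names really is the object controlled by the law of total variance. If $\thetaunc$ is vector-valued one would apply the identity coordinatewise (or to covariance matrices, with the residual term positive semidefinite), but the scalar statement as given follows immediately.
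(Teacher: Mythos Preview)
Your proposal is correct and mirrors the paper's own proof essentially line for line: the paper also applies the law of total variance to the posterior $g$, observes that $\Var(\E[\thetaunc\mid\thetacon])\ge 0$ yields the weak inequality, and notes that strictness follows when $\E[\thetaunc\mid\thetacon]$ is non-constant. Your additional remarks on the measure-theoretic reading of Definition~\ref{conditionalvariance} and the vector-valued extension are reasonable elaborations but not needed for the stated scalar result.
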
 
In other words, we reason about the effects of fixing a parameter $\thetacon$ to its true value $\thetacon^*$. That value $\thetacon^*$ is distributed according to the posterior distribution $g$, and so we reason about expectations over $g$. In expectation, fixing the value of $\thetacon$ does not increase the variance of any other parameter $\thetaunc$, and strictly reduces it as long as the expectation of $\thetaunc$ is non-constant in $\thetacon$. 

Both the expertise and prevalence constraints fix the value of at least one parameter. The expertise constraint fixes the value of ${\betadelta}_d$ for some $d$. For the Heckman model, the prevalence constraint fixes the value of the intercept ${\betaY}_0$ (assuming the standard condition that columns of $X$ are zero-mean except for an intercept column of ones). Thus, Proposition \ref{identifiability} implies that both constraints will not increase the variance of other model parameters, and will strictly reduce it as long as the posterior expectations of the unconstrained parameters are non-constant in the constrained parameters. In Appendix \ref{sec:appendix_proofs} we prove Proposition \ref{identifiability} and provide conditions under which the constraints strictly reduce the variance of other model parameters. We also verify and extend these theoretical results on synthetic data (Appendix \ref{sec:appendix_heckman_model} Figure \ref{fig:heckman_plot}).
\subsection{Empirical extension beyond the Heckman special case} 
\label{sec:empirical_extension_beyond_linear_case}
While we derive our theoretical results for a special case of our general model class, in our experiments (\S \ref{sec:synthetic_experiments} and \S \ref{sec:real_data_experiments}) we validate they hold beyond this special case by using a Bernoulli-sigmoid model: 
\begin{align}
\label{eq:uniform_model}
\begin{split}
Z_i &\sim \text{Uniform}(0, \sigma^2)\\
r_i &= \capXi^T\betaY+Z_i\\
Y_i&\sim \text{Bernoulli}(\text{sigmoid}(r_i))\\
T_i&\sim \text{Bernoulli}(\text{sigmoid}(\alpha r_i+\capXi^T\betadelta))\, .
\end{split}
\end{align}
We note two ways in which this model differs from the Heckman model. First, it uses a \textit{binary} disease outcome $Y$ because this is an appropriate choice for our breast cancer case study (\S \ref{sec:real_data_experiments}). With a binary outcome, models are known to be more challenging to fit: one cannot simultaneously estimate $\alpha$ and $\sigma$, and models fit without constraints may fail to recover the correct parameters~\citep{stata, vandeven1981demand,toomet2008sample}. Even in this more challenging case, we show that our proposed constraints improve model estimation. Second, this model uses a uniform distribution of unobservables instead of a normal distribution of unobservables. As we show in Appendix \ref{sec:appendix_uniform_model}, this choice allows us to marginalize out $Z_i$, greatly accelerating model-fitting.
\section{Synthetic experiments}
\label{sec:synthetic_experiments}
We now validate our proposed approach on synthetic data. Our theoretical results imply that our proposed constraints should reduce the variance of parameter posteriors (improving precision). We verify that this is the case. We also show empirically that the proposed constraints produce posterior mean estimates which lie closer to the true parameter values (improving accuracy).
\begin{figure}
\vspace{-2em}
  \begin{center}
    \includegraphics[width=0.9\textwidth]{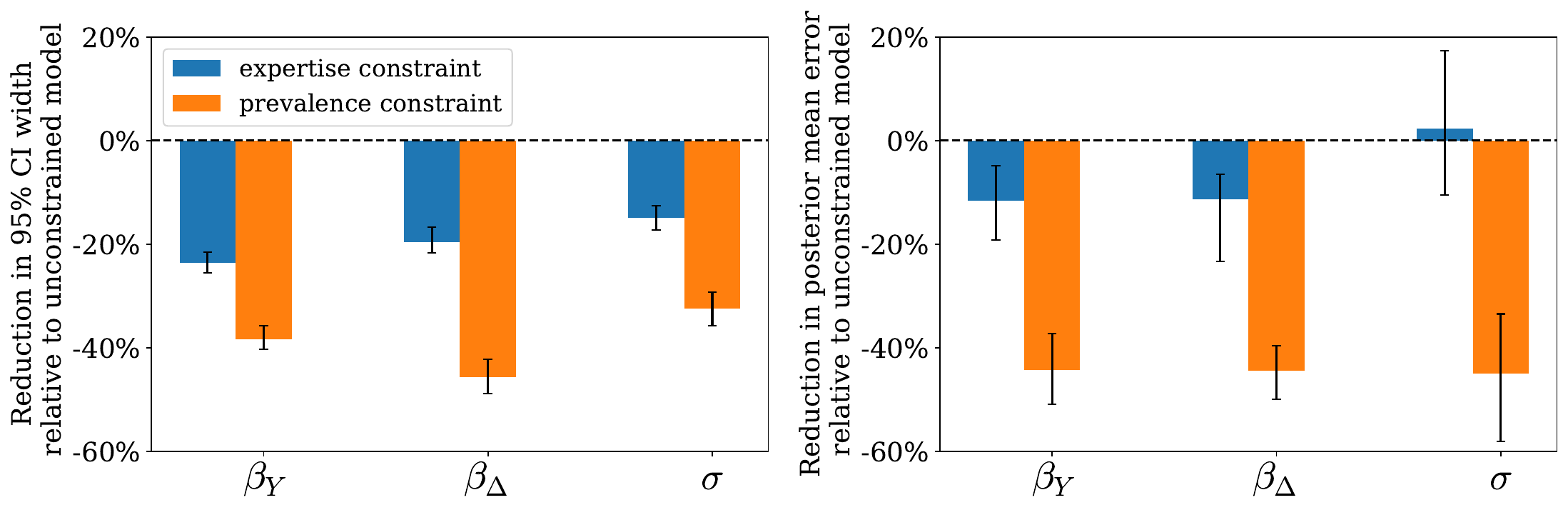}
  \end{center}
  \caption{The prevalence and expertise constraints each produce more precise and accurate inferences on synthetic data drawn from the Bernoulli-sigmoid model with uniform noise (\eqref{eq:uniform_model}). To quantify precision (left), we report the percent reduction in 95\% confidence interval width as compared to the unconstrained model. To quantify accuracy (right), we report the percent reduction in posterior mean error --- i.e., the absolute difference between  the posterior mean and the true parameter value --- as compared to the unconstrained model. We plot the median across 200 synthetic datasets. Error bars denote the bootstrapped 95\% confidence interval on the median.}
  \label{fig:bar_plot}
  \vspace{-1.5em}
\end{figure}

In Appendix \ref{sec:appendix_heckman_model}, we show experimentally that these results hold for the Heckman special case of our general model.
Here we show that our theoretical results apply beyond the Heckman special case by conducting experiments on models with binary outcomes and multiple noise distributions. For all experiments, we use the Bayesian inference package Stan~\citep{carpenter2017stan}, which uses the Hamiltonian Monte Carlo algorithm~\citep{betancourt2017conceptual}. We report results across 200 trials. For each trial, we generate a new dataset from the data generating process the model assumes; fit the model to that dataset; and evaluate model fit using two metrics: \emph{precision} (width of the 95\% confidence interval) and \emph{accuracy} (difference between the posterior mean and the true parameter value). We wish to assess the effect of the constraints on model inferences. Thus, we compare inferences from models with: (i) no constraints (unconstrained); (ii) a prevalence constraint; and (iii) an expertise constraint on a subset of the features. Details are in Appendix \ref{sec:appendix_syn_experiments} and the code is at \url{https://github.com/sidhikabalachandar/domain_constraints}.
 
Figure \ref{fig:bar_plot} shows results for the Bernoulli-sigmoid model with uniform unobservables (\eqref{eq:uniform_model}). Both constraints generally produce more precise and accurate inferences for all parameters relative to the unconstrained model. The one exception is that the expertise constraint does not improve accuracy for $\sigma^2$. Overall, the synthetic experiments corroborate and extend the theoretical analysis, showing that the proposed constraints improve precision and accuracy of parameter estimates for several variants of our general model. (In Appendix \ref{sec:appendix_syn_experiments}, we also provide results for other variants of our general model, including alternate distributions of unobservables (Figures \ref{fig:normal_fixed_sigma} and \ref{fig:normal_fixed_alpha}); higher-dimensional features (Figure \ref{fig:quadruple_features}); and non-linear interactions between features (Figure \ref{fig:pairwise_interactions}).)
\section{Real-world case study: Breast cancer testing}
\label{sec:real_data_experiments}
To demonstrate our model's applicability to healthcare settings, we apply it to a breast cancer testing dataset. In this setting, $\capXi$ consists of features capturing the person's demographics, genetics, and medical history; $T_i \in \{0, 1\}$ denotes whether a person has been tested for breast cancer; and $Y_i \in \{0, 1\}$ denotes whether the person is diagnosed with breast cancer. Our goal is to learn each person's risk of cancer---i.e., $p(Y_i=1|X_i)$. We focus on a younger population (age $\leq$ 45) because it creates a challenging distribution shift between the tested and untested populations. Younger people are generally not tested for cancer~\citep{nhs_screening_doc}, so the tested population $(T_i=1)$ may differ from the untested population, including on unobservables.

In the following sections, we describe our experimental set up and the model we fit (\S \ref{sec:experimental_setup}), we conduct four validations on the fitted model (\S \ref{sec:validating_model}), we use the model to assess historical testing decisions (\S \ref{sec:human_decisions}), and we compare to a model fit without a prevalence constraint (\S \ref{sec:comparison_without_prevalence}).
\subsection{Experimental setup}
\label{sec:experimental_setup}
Our data comes from the UK Biobank~\citep{sudlow2015uk}, which contains information on health, demographics, and genetics for the UK (see Appendix \ref{sec:appendix_ukbiobank} for details). We analyze 54,746 people by filtering for women under the age of 45 (there is no data on breast cancer tests for men). For each person, $X_i$ consists of 7 health, demographic, and genetic features found to be predictive of breast cancer~\citep{nih_risk_tool, cancer_risk_predictors2,yanes2020clinical}. $T_i \in \{0, 1\}$ denotes whether the person receives a mammogram (the most common breast cancer test) in the 10 years following measurement of features. $Y_i \in \{0, 1\}$ denotes whether the person is diagnosed with breast cancer in the 10 year period. $p(T=1) = 0.51$ and $p(Y=1|T=1)=0.03$.\footnote{We verify that very few people in the dataset have $T = 0$ and $Y = 1$ (i.e., are diagnosed with no record of a test): $p(Y=1|T=0) = 0.0005$. We group these people with the untested $T=0$ population, since they did not receive a breast cancer test.} 

As in the synthetic experiments, we fit the Bernoulli-sigmoid model with uniform unobservables (\eqref{eq:uniform_model}). We include a prevalence constraint $\mathbb{E}[Y] = 0.02$, based on previously reported breast cancer incidence statistics~\citep{cr_prevalence_stats}. We also include an expertise constraint by allowing $\betadelta$ to deviate from 0 only for features which plausibly influence a person's probability of being tested beyond disease risk. We do not place the expertise constraint on (i) racial/socioeconomic features, due to disparities in healthcare access~\citep{chen2021ethical,pierson2020assessing,shanmugam2021quantifying}; (ii) genetic features, since genetic information may be unknown or underused~\citep{samphao2009diagnosis}; and (iii) age, due to age-based breast cancer testing policies~\citep{nhs_screening_doc}. In Appendix \ref{sec:appendix_robustness_experiments} Figures \ref{fig:noise}, \ref{fig:alpha}, and \ref{fig:prevalence}, we run robustness experiments.
\begin{figure}[]
\vspace{-2em}
  \begin{center}
    \includegraphics[width=0.65\textwidth]{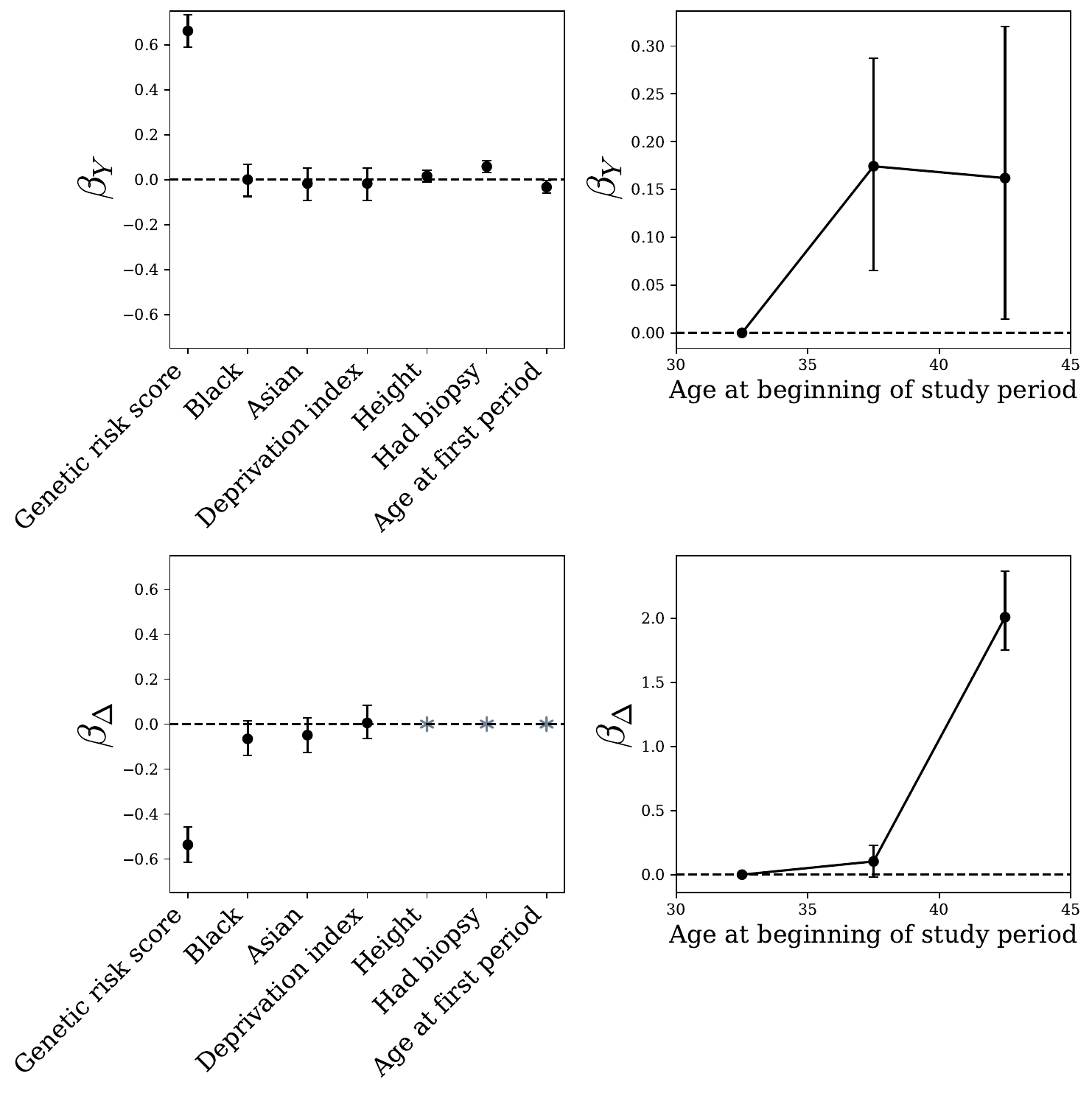}
  \end{center}
  \caption{Estimated $\betaY$ (top) capture known cancer risk factors: genetic risk, previous biopsy, age at first period (menarche), and age~\citep{nih_risk_tool,yanes2020clinical}. Estimated $\betadelta$ (bottom) capture the underuse of genetic information (left) and known age-based testing policies (right). Points indicate posterior means and vertical lines indicate 95\% confidence intervals. Gray asterisks indicate coefficients set to 0 by the expertise constraint.}
  \label{fig:coeff_forest}
  \vspace{-1.5em}
\end{figure}

In Figure \ref{fig:coeff_forest}, we plot the inferred coefficients for the fitted model. The model infers a large $\sigma^2=5.1$ (95\% CI, 3.7-6.8), highlighting the importance of unobservables. In Appendix \ref{sec:baselines} Figure \ref{fig:baselines}, we also compare our model’s performance to a suite of additional baselines, including (i) baselines trained solely on the tested population, (ii) baselines which treat the untested population as negative, and (iii) additional baselines commonly used in selective labels settings~\citep{rastogi2023learn}. Collectively, these baselines all suffer from various issues our model does not, including learning implausible age trends inconsistent with prior literature or worse predictive performance. 
\subsection{Validating the model} 
\label{sec:validating_model}
Validating models in real-world selective labels settings is difficult because outcomes are not observed for the untested. Still, we leverage the rich data in the UK Biobank to validate our model in four ways. 
\paragraph{Inferred risk predicts breast cancer diagnoses:} 
Verifying that inferred risk predicts diagnoses among the \emph{tested} population is straightforward. Since $Y$ is observed for the tested population, we check (on a test set) whether people with higher inferred risk ($p(Y_i=1|X_i)$) are more likely to be diagnosed with cancer ($Y_i = 1$). People in the highest inferred risk quintile\footnote{Reporting outcome rates by inferred risk quintile or decile is a common metric in health risk prediction settings~\citep{mullainathan2022diagnosing,einav2018predictive,obermeyer2019dissecting}.} have $3.3\times$ higher true risk of cancer than people in the lowest quintile (6.0\% vs 1.8\%). Verifying that inferred risk predicts diagnoses among the \emph{untested} population is less straightforward because $Y_i$ is not observed. We leverage that a subset have a \emph{follow-up} visit (i.e., an observation after the initial 10-year study period) to show that inferred risk predicts cancer diagnosis at the follow-up. For the subset of the untested population who attend a follow-up visit, people in the highest inferred risk quintile have $2.5\times$ higher true risk of cancer during the follow-up period than people in the lowest quintile (4.1\% vs 1.6\%).\footnote{AUC amongst the tested population is 0.63 and amongst the untested population that attended a followup is 0.63. These AUCs are similar to past predictions which use similar feature sets~\citep{yala2021toward}. For instance, the Tyrer-Cuzick~\citep{tyrer2004breast} and Gail~\citep{gail1989projecting} models achieved AUCs of 0.62 and 0.59.}
\paragraph{Inferred unobservables correlate with known unobservables:} For each person, our model infers a posterior over unobservables $p(Z_i | X_i, T_i, Y_i)$. We confirm that the inferred posterior mean of unobservables correlates with a true unobservable---whether the person has a family history of breast cancer. This is an unobservable because it influences both $T_i$ and $Y_i$ but is not included in the data given to the model.\footnote{Although UKBB has family history data, we do not include it as a feature both so we can use it as validation and because we do not have information on \emph{when} family members are diagnosed. So we cannot be sure that the measurement of family history precedes the measurement of $T_i$ and $Y_i$, as is desirable for features in $X_i$.} People in the highest inferred unobservables quintile are $2.1\times$ likelier to have a family history of cancer than people in the lowest quintile (15.6\% vs 7.5\%).
\paragraph{$\betaY$ captures known cancer risk factors:} $\betaY$ measures each feature's contribution to risk. The top left plot in Figure \ref{fig:coeff_forest} shows that the inferred $\betaY$ captures known cancer risk factors. Cancer risk is strongly correlated with genetic risk, and is also correlated with previous breast biopsy, age, and younger age at first period (menarche)~\citep{nih_risk_tool, yanes2020clinical}. 
\paragraph{$\betadelta$ captures known public health policies:} In the UK, all women aged 50-70 are invited for breast cancer testing every 3 years~\citep{nhs_screening_doc}. Our study period spans 10 years, so we expect women who are 40 or older at the start of the study period (50 or older at the end) to have an increased probability of testing when controlling for true cancer risk. The bottom right plot in Figure \ref{fig:coeff_forest} shows this is the case, since the $\betadelta$ indicator for ages 40-45 is greater than the indicators for ages $<$35 and 35-39.
\begin{wrapfigure}{r}{0.5\textwidth}
\vspace{1em}
  \begin{center}
    \includegraphics[width=0.48\textwidth]{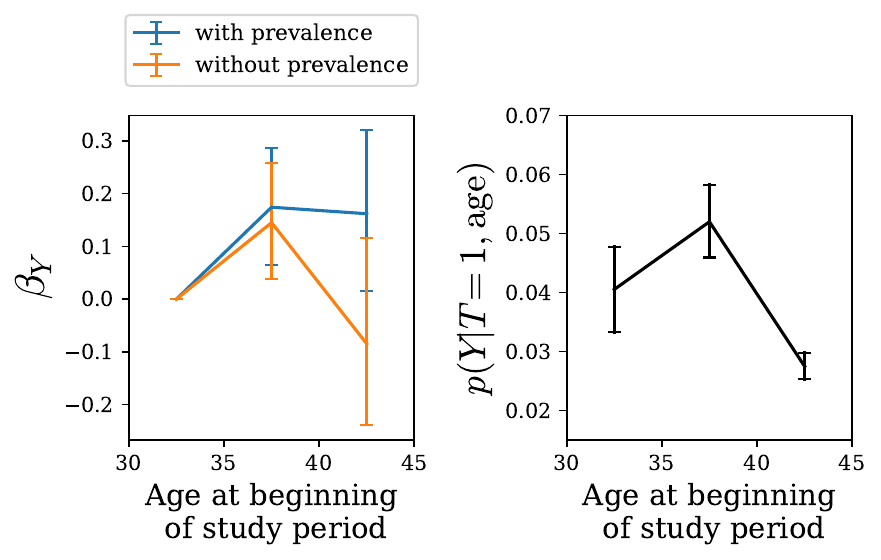}
  \end{center}
  \caption{Without the prevalence constraint, the model learns that cancer risk first increases and then decreases with age (left orange), contradicting prior literature~\citep{cancer_risk_predictors2,cr_prevalence_stats,us2013us,campisi2013aging}. This incorrect inference occurs because the tested population has the same misleading age trend (right). In contrast, the prevalence constraint encodes that the (younger) untested population has lower risk, allowing the model to learn a more accurate age trend (left blue).}
  \label{fig:age_coeff}
  \vspace{-4em}
\end{wrapfigure}
\subsection{Assessing historical\\testing decisions} 

\label{sec:human_decisions}
Non-zero components of $\betadelta$ indicate features that affect a person's probability of being tested even when controlling for their disease risk. The bottom left plot in Figure \ref{fig:coeff_forest} plots the inferred $\betadelta$, revealing that genetic information is underused. While genetic risk is strongly predictive of $Y_i$, its negative $\betadelta$ indicates that people at high genetic risk are tested less than expected given their risk. This is plausible, given that their genetic information may not have been available to guide decision-making. The model also infers negative point estimates for $\betadelta$ for Black and Asian women, consistent with known racial disparities in breast cancer testing~\citep{makurumidze2022addressing} as well as broader racial inequality in healthcare and other domains~\citep{nazroo2007black,zink2023race,movva2023coarse,obermeyer2019dissecting,franchi2023detecting,otu2020one,devonport2023systematic}. However, both confidence intervals overlap zero (due to the small size of these groups in our dataset).

\subsection{Comparison to model without prevalence constraint}
\label{sec:comparison_without_prevalence}
The prevalence constraint also guides the model to more plausible inferences. We compare the model fit with and without a prevalence constraint. As shown in the left plot in Figure \ref{fig:age_coeff}, without the prevalence constraint, the model learns that cancer risk first increases with age and then falls, contradicting prior epidemiological and physiological evidence~\citep{cancer_risk_predictors2, cr_prevalence_stats,us2013us,campisi2013aging}. This is because, due to the age-based testing policy in the UK~\citep{nhs_screening_doc}, being tested for breast cancer before age 50 is unusual. Thus, the tested population under age 50 is non-representative because their risk is much higher than the corresponding untested population. The prevalence constraint guides the model to more plausible  inferences by preventing the model from predicting that a large fraction of the untested (younger) population has the disease.

\section{Related work}
\label{sec:related_work}
Selective labels problems occur in many domains, including hiring, insurance, government inspections, tax auditing, recommender systems, lending, healthcare, education, welfare services, wildlife protection, and criminal justice~\citep{lakkaraju2017selective,jung2020simple,kleinberg2018human,bjorkegren2020behavior,jung2018omitted,jehi2020individualizing,mcdonald2021derivation,lauferend,mcwilliams2019towards,crook2004does,hong2018predicting,parker2019predicting,sun2011predicting,kansagara2011risk,waters2014grade,bogen2019all,jawaheer2010comparison,wu2017understanding,coston2020counterfactual,de2021leveraging,pierson2020assessing,pierson2020large,simoiu2017problem,mullainathan2022diagnosing,henderson2022integrating,gholami2019stay,farahani2020explanatory,underreporting22,cai2020fair, daysal2022economic, guerdan2023counterfactual, chan2022selection, jiang2021learning,chien2023algorithmic, jia2019anthropogenic}. As such, there are related literatures in machine learning and causal inference~\citep{coston2020counterfactual,schulam2017reliable,lakkaraju2017selective,kleinberg2018human,shimodaira2000improving,de2021leveraging,levine2020offline,koh2021wilds,sagawa2021extending,kaur2022modeling,sahoo2022learning, cortesgomez2023statistical}, econometrics~\citep{mullainathan2022diagnosing,rambachan2022counterfactual,heckman1976common,hull2021marginal,kunzel2019metalearners,shalit2017estimating,wager2018estimation,alaa2018limits}, statistics and Bayesian models~\citep{ilyas2020theoretical,daskalakis2021efficient,mishler2021fade,jung2020bayesian}, and epidemiology~\citep{groenwold2012dealing,perkins2018principled}. We extend this literature by providing constraints which both theoretically and empirically improve parameter inference. We now describe the three lines of work most closely related to our modeling approach.
\paragraph{Generalized linear mixed models (GLMMs):} Our model is closely related to GLMMs~\citep{gelman2013bayesian,stroup2012generalized,lum2022closer}, which model observations as a function of both observed features $X_i$ and unobserved ``random effects'' $Z_i$. We extend this literature by (i) proposing and analyzing a novel model to capture our selective labels setting; (ii) incorporating the uniform distribution of unobservables, as opposed to the normal distribution typically used in GLMMs, to yield more tractable inference; and most importantly (iii) incorporating healthcare domain constraints into GLMMs to improve model estimation.
\paragraph{Improving robustness to distribution shift using domain information:} The selective labels setting represents a specific type of distribution shift from the tested to untested population. Previous work shows that generic methods often fail to perform well across all types of distribution shifts~\citep{gulrajani2020search,koh2021wilds,sagawa2021extending,wiles2021fine,kaur2022modeling} and that incorporating domain information can improve performance.~\citet{gao2023out} proposes \emph{targeted augmentations}, which augment the data by randomizing known spurious features while preserving robust ones.~\citet{tellez2019quantifying} presents an example of this strategy for histopathology slide analysis.~\citet{kaur2022modeling} shows that modeling the data generating process is necessary for generalizing across distribution shifts. Motivated by this, we propose a data generating process suitable for selective labels settings and show that using domain information improves performance.
\paragraph{Breast cancer risk estimation:} There are many related works on estimating breast cancer risk~\citep{daysal2022economic, yala2019deep, yala2021toward, yala2022optimizing,shen2021artificial}. Our work complements this literature by proposing a Bayesian model which captures the selective labels setting and incorporating domain constraints to improve model estimation. While a linear model suffices for the low-dimensional features used in our case study, our approach naturally extends to more complex inputs (e.g., medical images) and deep learning models sometimes used in breast cancer risk prediction~\citep{yala2019deep,yala2021toward, yala2022optimizing}.
\section{Discussion}
We propose a Bayesian model class to infer risk and assess historical human decision-making in selective labels settings, which commonly occur in healthcare and other domains. We propose the prevalence and expertise constraints which we show both theoretically and empirically improve parameter inference. We apply our model to cancer risk prediction, validate its inferences, show it can identify suboptimalities in test allocation, and show the prevalence constraint prevents misleading inferences.

A natural future direction is applying our model to other healthcare settings, where a frequent practice is to train risk-prediction models only on the tested population~\citep{jehi2020individualizing,mcdonald2021derivation,farahani2020explanatory}. This is far from optimal both because only a small fraction of the population is tested, increasing variance, and because the tested population is highly non-representative, increasing bias. The paradigm we propose offers a solution to both problems. Using data from the entire population reduces variance, and modeling the distribution shift and constraining inferences on the untested population reduces bias. Beyond healthcare, other selective labels domains may have other natural domain constraints: for example, randomly assigned human decision-makers~\citep{kleinberg2018human} or repeated measurements of the same individual~\citep{lum2022closer}. Beyond selective labels, our model represents a concrete example of how domain constraints can improve inference in the presence of distribution shift.
\subsubsection*{Acknowledgments}
The authors thank Gabriel Agostini, Sivaramakrishnan Balachandar, Serina Chang, Erica Chiang, Avi Feller, Eran Halperin, Andrew Ilyas, Pang Wei Koh, Ben Laufer, Zhi Liu, Smitha Milli, Sendhil Mullainathan, Josue Nassar, Kenny Peng, Ashesh Rambachan, Richa Rastogi, Evan Rose, Shuvom Sadhuka, Jacob Steinhardt, Robert Tillman, and Manolis Zampetakis for helpful conversations. This research was supported by a Google Research Scholar award, NSF CAREER \#2142419, a CIFAR Azrieli Global scholarship, Optum, a LinkedIn Research Award, the Abby Joseph Cohen Faculty Fund, and NSF GRFP Grant DGE \#2139899. This research has been conducted using the UK Biobank Resource under Application Number 72589. Any opinions, findings, conclusions, or recommendations expressed in this material are those of the authors and do not necessarily reflect the views of the funders.
\bibliography{iclr2024_conference}
\bibliographystyle{iclr2024_conference}

\newpage
\setcounter{figure}{0}
\makeatletter 
\renewcommand{\thefigure}{S\@arabic\c@figure}
\makeatother

\setcounter{table}{0}
\makeatletter 
\renewcommand{\thetable}{S\@arabic\c@table}
\makeatother
\appendix
\section{Calculating disease prevalence}
\label{sec:appendix_prevalence_constraint}
To implement the prevalence constraint, we assume that the \textit{disease prevalence}, or average value of $Y$ across the population, is at least approximately known. This assumption is plausible in medical settings because estimating prevalence is the focus of substantial public health research. Methods to calculate prevalence include serology, where blood samples are used to detect specific antibodies or antigens of a disease~\citep{joseph1995bayesian}; stool or wastewater testing for disease markers~\citep{joseph1995bayesian, mcmahan2021covid}; genetic methods, where genomic registries can be analyzed to calculate allele frequency and estimate disease prevalence~\citep{schrodi2015prevalence}; autopsy reports for a particular disease~\citep{bell2015prevalence}; and administrative data collected by primary, outpatient, and inpatient care centers~\citep{wirehn2007estimating}. Additionally, our Bayesian formulation can incorporate approximate prevalence estimates (e.g. bounded estimates), and these bounds can be estimated using the sensitivity and specificity of the prevalence estimation method~\citep{manski2021estimating, manski2020bounding, mullahy2021embracing}.
\section{Proofs}
\label{sec:appendix_proofs}
\paragraph{Proof outline:} In this section, we provide three proofs to show why domain constraints improve parameter inference. We start by showing that the well-studied Heckman correction model~\citep{heckman1976common,heckman1979sample} is a special case of the general model in \eqref{eq:DGP} (Proposition \ref{heckman}). It is known that placing constraints on the Heckman model can improve parameter inference~\citep{lewbel2019identification}. We show that our proposed prevalence and expertise constraints have a similar effect by proving that our proposed constraints never worsen the precision of parameter inference (Proposition \ref{identifiability}). We then provide conditions under which our constraints strictly improve precision (Proposition \ref{strict_inequality}).
\paragraph{Notation and assumptions:} Below, we use $\Phi$ to denote the normal CDF, $\phi$ the normal PDF, and $\betaT = \alpha\betaY+\betadelta$. Let $\capX$ be the matrix of observable features. We assume that the first column of $\capX$ corresponds to the intercept; $\capX$ is zero mean for all columns except the intercept; and the standard identifiability condition that our data matrix is full rank, i.e., $\capX^T\capX$ is invertible. We also assume that $\alpha>0$.

We start by defining the Heckman correction model.
\heckmandef*
In other words, $T_i=1$ if a linear function of $\capXi$ plus some unit normal noise $u_i$ exceeds zero. $Y_i$ is a linear function of $\capXi$ plus normal noise $Z_i$ with variance $\tilde{\sigma}^2$. Importantly, the noise terms $Z_i$ and $u_i$ are \emph{correlated}, with covariance $\tilde{\rho}$. The model parameters are $\tilde{\theta} \triangleq (\tilde{\rho}, \tilde{\sigma}^2, \tildebetaT, \tildebetaY)$. We use tildes over the Heckman model parameters to distinguish them from the parameters in our original model in \eqref{eq:DGP}. We now prove Proposition \ref{heckman}.
\heckman*
\begin{proof}
If we substitute in the value of $r_i$, the equation for $Y_i$ is equivalent to that in the Heckman model. So it remains only to show that $T_i$ in \eqref{eq:heckmanmodel} can be rewritten in the form in \eqref{eq:properheckmanmodel}. We first rewrite \eqref{eq:heckmanmodel} in slightly more convenient form:
\begin{align*}
T_i&\sim \text{Bernoulli}(\Phi(\alpha r_i+\capXi^T\betadelta)) \rightarrow\\ 
T_i&\sim \text{Bernoulli}(\Phi(\alpha (\capXi^T\betaY + Z_i)+\capXi^T\betadelta)) \rightarrow\\ 
T_i&\sim \text{Bernoulli}(\Phi(\capXi^T(\alpha \betaY + \betadelta) + \alpha Z_i)) \rightarrow\\ 
T_i&\sim \text{Bernoulli}(\Phi(\capXi^T \betaT + \alpha Z_i))\, .
\end{align*}
 We then apply the latent variable formulation of the probit link:
\begin{align*}
T_i&\sim \text{Bernoulli}(\Phi(\capXi^T \betaT + \alpha Z_i)) \rightarrow \\ 
T_i &= \mathbbm{1}[\capXi^T\betaT + \alpha Z_i + \epsilon_i > 0], \epsilon_i \sim \mathcal{N}(0, 1)\, ,
\end{align*}
where $\alpha Z_i + \epsilon_i$ is a normal random variable with standard deviation $\sqrt{\alpha^2\sigma^2 + 1}$. We divide through by this factor to rewrite the equation for $T_i$:
\begin{align*}
T_i &= \mathbbm{1}[\capXi^T\tildebetaT + u_i > 0]\, ,
\end{align*}
which is equivalent to \eqref{eq:properheckmanmodel}. Here, $\tildebetaT =  \frac{\beta_T}{\sqrt{\alpha^2\sigma^2 + 1}}$ and $u_i=\frac{\alpha Z_i + \epsilon_i}{\sqrt{\alpha^2\sigma^2 + 1}}$ is a unit-scale normal random variable whose covariance with $Z_i$ is
\begin{align*}
\text{cov}\left(\frac{\alpha Z_i + \epsilon_i}{\sqrt{\alpha^2\sigma^2 + 1}}, Z_i\right) &= \bbE\left(\frac{\alpha Z_i + \epsilon_i}{\sqrt{\alpha^2\sigma^2 + 1}}\cdot Z_i\right)-\bbE\left(\frac{\alpha Z_i + \epsilon_i}{\sqrt{\alpha^2\sigma^2 + 1}}\right)\bbE\left(Z_i\right)\\
&=\frac{\alpha\bbE\left(Z_i^2\right)}{\sqrt{\alpha^2\sigma^2 + 1}}\\
&=\frac{\alpha \sigma^2}{\sqrt{\alpha^2\sigma^2 + 1}}\, .
\end{align*}
Thus, the special case of our model in \eqref{eq:heckmanmodel} is equivalent to the Heckman model, where the mapping between the parameters is:
\begin{align}
\begin{split}
\tildebetaY &= \betaY\\
\tilde{\sigma}^2 &= \sigma^2\\
\tildebetaT &=  \frac{\betaT}{\sqrt{\alpha^2\sigma^2 + 1}} \\
\tilde{\rho} &= \frac{\alpha \sigma^2}{\sqrt{\alpha^2\sigma^2 + 1}}\, .
\end{split}
 \label{eq:mapping}
\end{align}
\end{proof}
As described in \cite{lewbel2019identification}, the  Heckman correction model is identified without any further assumptions. It then follows that the special case of our model in \eqref{eq:heckmanmodel} is identified without further constraints. One can simply estimate the Heckman model, which by the mapping in \eqref{eq:mapping} immediately yields estimates of $\betaY$ and $\sigma^2$. Then, the equation for $\tilde{\rho}$ can be solved for $\alpha$, yielding a unique value since $\alpha>0$. Similarly the equation for $\tildebetaT$ yields the estimate for $\betaT$ (and thus $\betadelta$).  

While the Heckman model is identified without further constraints, this identification is known to be very weak, relying on functional form assumptions~\citep{lewbel2019identification}. To mitigate this problem, when the Heckman model is used in the econometrics literature it is typically estimated with constraints on the parameters. In particular, a frequently used constraint is an \emph{exclusion restriction}: there must be at least one feature with a non-zero coefficient in the equation for $T$ but not $Y$. While this constraint differs from the ones we propose, one might expect our proposed prevalence and expertise constraints to have a similar effect and improve the precision of parameter inference. We make this precise through Proposition \ref{identifiability}. 

Throughout the results below, we analyze the posterior distribution of model parameters given the observed data: $g(\theta) \triangleq p(\theta|\capX, T, Y)$. We show that constraining the value of any one parameter (through the prevalence or expertise constraint) will not worsen the posterior variance of the other parameters. In particular, constraining a parameter $\thetacon$ to a value drawn from its posterior distribution will not in expectation increase the posterior variance of any other unconstrained parameters $\thetaunc$. To formalize this, we define the \emph{expected conditional variance}:
\conditionalvariance*
\identifiability*
\begin{proof}
The proof follows from applying the law of total variance to the posterior distribution $g$. The law of total variance states that:
\begin{align*}
     \textrm{Var}(\thetaunc) = \mathbb{E}[\textrm{Var}(\thetaunc|\thetacon)] +  \textrm{Var}(\mathbb{E}[\thetaunc|\thetacon])\, .
\end{align*}
Since $\textrm{Var}(\mathbb{E}[\thetaunc|\thetacon])$ is non-negative,
\begin{align*}
     \mathbb{E}[\textrm{Var}(\thetaunc|\thetacon)]\leq\textrm{Var}(\thetaunc)\, .
\end{align*}
Additionally, if $\mathbb{E}[\thetaunc|\thetacon]$ is non-constant in $\thetacon$ then $\textrm{Var}(\mathbb{E}[\thetaunc|\thetacon])$ is strictly positive. Thus the strict inequality follows.
\end{proof}
We now discuss how Proposition \ref{identifiability} applies to our proposed constraints and the Heckman model. Both the prevalence and expertise constraints fix the value of at least one parameter. The prevalence constraint fixes the value of ${\betaY}_0$ and the expertise constraint fixes the value of ${\betadelta}_d$ for some $d$. Thus by Proposition \ref{identifiability}, we know that the prevalence and expertise constraints will not increase the variance of any model parameters, and will strictly reduce them as long as the posterior expectations of the unconstrained parameters are non-constant in the constrained parameters.

We now show that when $\tildebetaT$ is known, the prevalence constraint strictly reduces variance. The setting where $\tildebetaT$ is known is a natural one because $\tildebetaT$ can be immediately estimated from the observed data $\capX$ and $T$, and previous work in both econometrics and statistics thus have also considered this setting~\citep{heckman1976common, ilyas2020theoretical}. With additional assumptions, we also show that the expertise constraint strictly reduces variance. We derive these results in the setting with flat priors for algebraic simplicity. However, analogous results also hold under other natural choices of prior (e.g., standard conjugate priors for Bayesian linear regression~\citep{jackman2009bayesian}). In the results below, we analyze the conditional mean of $Y$ conditioned on $T=1$. Thus, we start by defining this value.
\begin{restatable}[Conditional mean of $Y$ conditioned on $T=1$]{lemma}{conditional_mean}
\label{conditional_mean}
Past work has shown that the expected value of $Y_i$ when $T_i = 1$ is~\citep{heckman_model}:
\begin{align*}
\bbE[Y_i|T_i=1] &= \bbE[Y_i|\capX_i^T\tildebetaT + u > 0]\\ 
&= \capX_i\tildebetaY + \tilde{\rho}\tilde{\sigma}\frac{\phi(\capX_i\tildebetaT)}{\Phi(\capX_i\tildebetaT)}\, ,
\end{align*}
where $\Phi$ denotes the normal CDF, $\phi$ the normal PDF, and $\frac{\phi(\capX\tildebetaT)}{\Phi(\capX\tildebetaT)}$ the inverse Mills ratio. This can be more succinctly represented in matrix notation as
\begin{align*}
\bbE[Y_i|T_i=1] = M \theta\, ,
\end{align*}
where $M = [\capX_{T=1}; \frac{\phi(\capX_{T=1}\tildebetaT)}{\Phi(\capX_{T=1}\tildebetaT)}]\in\mathbb{R}^{N_{T=1}\times (d+1)}$, $\theta = [{\tildebetaY}, \tilde{\rho}\tilde{\sigma}]\in\mathbb{R}^{d+1}$, $\capX_{T=1}$ denotes the rows of $X$ corresponding to $T=1$, and $N_{T=1}$ is the number of rows of $\capX$ for which $T=1$.
\end{restatable}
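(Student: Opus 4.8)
The plan is to reduce the claim to a single truncated-normal computation. Conditioning on $\capXi$, the terms $\capXi^T\tildebetaY$ and $\capXi^T\tildebetaT$ are constants, so by the structural equation $Y_i = \capXi^T\tildebetaY + Z_i$ the selection event $\{T_i = 1\}$ is exactly $\{u_i > -\capXi^T\tildebetaT\}$. Hence $\bbE[Y_i \mid T_i = 1] = \capXi^T\tildebetaY + \bbE[Z_i \mid u_i > -\capXi^T\tildebetaT]$, and the entire content of the lemma lies in evaluating the conditional mean of $Z_i$ given a one-sided truncation of its correlated Gaussian partner $u_i$.

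To compute $\bbE[Z_i \mid u_i > c]$, I would use the bivariate-normal regression decomposition $Z_i = \beta u_i + \varepsilon_i$, where $\beta = \Cov(u_i, Z_i)/\Var(u_i)$ and $\varepsilon_i$ is normal and independent of $u_i$; this independence is the defining feature of jointly Gaussian variables. Since $\varepsilon_i \perp u_i$, we get $\bbE[\varepsilon_i \mid u_i > c] = \bbE[\varepsilon_i] = 0$, so $\bbE[Z_i \mid u_i > c] = \beta\,\bbE[u_i \mid u_i > c]$. Because $u_i$ is standard normal, the truncated-mean identity $\bbE[u_i \mid u_i > c] = \phi(c)/(1 - \Phi(c))$ applies, which follows from $\phi'(x) = -x\phi(x)$ (equivalently, integrating $\int_c^\infty x\phi(x)\,dx$ by parts). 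Substituting $c = -\capXi^T\tildebetaT$ and invoking the symmetries $\phi(-x) = \phi(x)$ and $1 - \Phi(-x) = \Phi(x)$ turns the ratio into the inverse Mills ratio $\phi(\capXi^T\tildebetaT)/\Phi(\capXi^T\tildebetaT)$. Reading off the regression coefficient $\beta$ from the model's covariance structure then yields the scalar constant $\tilde{\rho}\tilde{\sigma}$ appearing in the stated formula.

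The matrix form is pure bookkeeping: restricting to the rows with $T = 1$, the scalar identity is affine in the stacked parameter vector $\theta = [\tildebetaY, \tilde{\rho}\tilde{\sigma}]$ once the design matrix $M$ appends the inverse-Mills-ratio column $\phi(\capX_{T=1}\tildebetaT)/\Phi(\capX_{T=1}\tildebetaT)$ to $\capX_{T=1}$, giving $\bbE[Y_i \mid T_i = 1] = M\theta$. I expect the only load-bearing steps to be the Gaussian regression decomposition and the truncated-normal mean; everything downstream is symmetry of $\phi$ and $\Phi$ together with substitution. Since the lemma cites this as a known Heckman result, I would present those two Gaussian facts explicitly and treat the remaining algebra as routine.
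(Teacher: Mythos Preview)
The paper does not actually prove this lemma: it is stated as a known Heckman-model fact with a citation and then invoked as input to the proof of Proposition~\ref{strict_inequality}. Your plan supplies exactly the standard derivation (Gaussian regression decomposition of $Z_i$ on $u_i$, truncated standard-normal mean, symmetry of $\phi$ and $\Phi$), so you are providing strictly more detail than the paper does, and the argument is sound.

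One point to tighten in the final ``reading off'' step. With the covariance matrix written literally as in Definition~\ref{heckman_model}, the off-diagonal entry $\tilde\rho$ \emph{is} $\Cov(u_i,Z_i)$, and since $\Var(u_i)=1$ your regression coefficient $\beta=\Cov(u_i,Z_i)/\Var(u_i)$ comes out to $\tilde\rho$, not $\tilde\rho\tilde\sigma$. The formula in the lemma, and the choice to package $\tilde\rho\tilde\sigma$ as a single component of $\theta$, follows the common Heckman convention in which $\tilde\rho$ denotes the \emph{correlation} (so that the covariance is $\tilde\rho\tilde\sigma$). When you present the derivation, state explicitly which convention is in force rather than leaving the coefficient as something to be ``read off''; otherwise a careful reader will see a factor-of-$\tilde\sigma$ mismatch between your computation and the stated result.
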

\begin{restatable}[]{proposition}{strict_inequality}
\label{strict_inequality}
Assume $\tildebetaT$ is fixed and flat priors on all parameters. Additionally, assume the standard identifiability condition that the matrix $M = [\capX_{T=1}; \frac{\phi(\capX_{T=1}\tildebetaT)}{\Phi(\capX_{T=1}\tildebetaT)}]$ is full rank. Then, in expectation, constraining a component of $\tildebetaY$ in the Heckman correction model strictly reduces the posterior variance of the other model parameters. The prevalence constraint does this without any further assumptions, and the expertise constraint does this if $\tilde{\rho}$ and $\tilde{\sigma}^2$ are fixed.
\end{restatable}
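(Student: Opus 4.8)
The plan is to reduce the whole statement to the non-constancy criterion already supplied by Proposition \ref{identifiability} and then to verify that criterion using the linear-Gaussian structure that the fixed-$\tildebetaT$ assumption induces. Proposition \ref{identifiability} already delivers the weak inequality unconditionally and upgrades it to a strict one exactly when $\bbE[\thetaunc \mid \thetacon]$ is non-constant in $\thetacon$, so the only real work is to establish this non-constancy. First I would pin down the posterior. By Lemma \ref{conditional_mean}, once $\tildebetaT$ is fixed the selection-corrected conditional mean $\bbE[Y_i \mid T_i = 1] = M\theta$ is linear in $\theta = [\tildebetaY, \tilde{\rho}\tilde{\sigma}]$ with a fully known design $M$. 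Hence fitting the model on the $T=1$ subsample is an ordinary linear-Gaussian regression, and under flat priors the posterior is $\theta \sim \mathcal{N}(\hat{\theta}, c\,(M^{\top} M)^{-1})$ for a positive constant $c$; the full-rank hypothesis on $M$ makes the covariance $\Sigma = c\,(M^{\top} M)^{-1}$ positive definite.

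Next I would exploit Gaussianity to make the criterion concrete. For jointly Gaussian coordinates the conditional mean is affine, $\bbE[\thetaunc \mid \thetacon] = \mu_u + (\Sigma_{uc}/\Sigma_{cc})(\thetacon - \mu_c)$, so by the law of total variance the reduction equals $\mathrm{Var}(\bbE[\thetaunc\mid\thetacon]) = \Sigma_{uc}^2/\Sigma_{cc}$. Thus the strict reduction $\mathrm{Var}(\thetaunc) - \bbE[\mathrm{Var}(\thetaunc \mid \thetacon)] > 0$ holds iff the posterior covariance $\Sigma_{uc} = \mathrm{Cov}(\thetaunc,\thetacon)$ is nonzero, i.e.\ iff the corresponding off-diagonal entry of $(M^{\top} M)^{-1}$ does not vanish. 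The problem therefore becomes: show that the constrained coordinate is a genuine partial-correlate of every other coordinate.

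It then remains to phrase both constraints as fixing a component of $\tildebetaY$. The prevalence constraint does this directly: with zero-mean features, knowing $\bbE[Y]$ pins down the intercept ${\tildebetaY}_0$, so no extra assumptions are needed. The expertise constraint is subtler because it fixes ${\betadelta}_d = 0$, which is not a priori a coordinate of $\tildebetaY$; here I would invoke the mapping \eqref{eq:mapping}. Fixing $\tilde{\rho}$ and $\tilde{\sigma}^2$, together with the known $\tildebetaT$ and the sign condition $\alpha>0$, determines $\sigma^2$ and $\alpha$ uniquely, after which ${\betadelta}_d = {\tildebetaT}_d\sqrt{\alpha^2\sigma^2+1} - \alpha{\betaY}_d = 0$ is equivalent to fixing ${\betaY}_d = {\tildebetaY}_d$. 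This is precisely why the expertise constraint requires the extra assumptions while the prevalence constraint does not, and in both cases the covariance argument of the previous paragraph then applies verbatim.

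The main obstacle is this last substantive step: showing that the relevant off-diagonal entries of $(M^{\top} M)^{-1}$ are nonzero, i.e.\ that the constrained column of $M$ is not $(M^{\top} M)^{-1}$-orthogonal to the others. I would handle it through the structure of $M$: the inverse-Mills column $\phi(\capX_{T=1}\tildebetaT)/\Phi(\capX_{T=1}\tildebetaT)$ is a strictly positive, nonconstant nonlinear transform of $\capX_{T=1}\tildebetaT$, so it is neither collinear with the linear columns nor orthogonal to the intercept, forcing the associated cofactor (partial correlation) to be nonzero. The delicate part is ensuring this for \emph{every} unconstrained coordinate rather than merely generically, and formalizing that the degenerate designs on which some cofactor vanishes are excluded by the full-rank hypothesis (or at worst confined to a measure-zero set of designs).
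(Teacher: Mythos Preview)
Your plan matches the paper's at every structural level: reduce to the non-constancy criterion of Proposition~\ref{identifiability}; use the fixed-$\tildebetaT$ assumption to turn the selected-sample model into an ordinary linear regression of $Y_{T=1}$ on $M$ under flat priors; and show that each constraint pins a coordinate of $\tildebetaY$ (the intercept for prevalence, a feature coordinate for expertise via the mapping~\eqref{eq:mapping} once $\tilde{\rho},\tilde{\sigma}^2$ are held fixed).

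The only real difference is in how you verify non-constancy. You pass through the Gaussian posterior covariance $c(M^\top M)^{-1}$ and reduce strict reduction to a nonzero off-diagonal of this inverse, which you then try to certify via structural properties of the inverse-Mills-ratio column. The paper instead writes the conditional posterior mean directly as the OLS fit on the residualized response,
\[
\bbE\bigl[\theta_{-d}\mid {\tildebetaY}_d = t\bigr]=(M_{-d}^\top M_{-d})^{-1}M_{-d}^\top\bigl(Y - X_{{T=1}_d}\,t\bigr),
\]
so the affine slope $-(M_{-d}^\top M_{-d})^{-1}M_{-d}^\top X_{{T=1}_d}$ is exhibited without inverting $M^\top M$ and without leaning on exact Gaussianity of the posterior (under flat priors the posterior mean is simply the least-squares solution, even with $\tilde{\sigma}^2$ free---whereas your covariance formula $c(M^\top M)^{-1}$ tacitly treats $\tilde{\sigma}^2$ as known). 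By Schur complement your off-diagonal condition and the paper's nonzero-slope condition coincide, but the paper's route is shorter and bypasses your ``main obstacle'': it just asserts the slope vector is nonzero from $X_{{T=1}_d}\neq 0$ together with the full-rank hypothesis, rather than analyzing entries of $(M^\top M)^{-1}$. Your scruple that strictness for \emph{every} unconstrained coordinate is a partial-correlation condition not literally guaranteed by full rank alone is well taken---the paper does not address it either---so your Mills-ratio analysis goes beyond what the paper actually supplies, but also beyond what its proof claims to need.
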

\begin{proof}
We will start by showing that when $\tildebetaT$ is fixed, constraining a component of $\tildebetaY$ strictly reduces the variance of the other model parameters. From the definition of the conditional mean of $Y$ conditioned on $T=1$ (Lemma \ref{conditional_mean}), we get
\begin{align*}
\bbE[Y_i|T_i=1] = M \theta\, .
\end{align*}
Under flat priors on all parameters, the posterior expectation of the model parameters given the observed data $\{\capX, T, Y\}$ is simply the standard ordinary least squares solution given by the normal equation~\citep{jackman2009bayesian}: 
\begin{align*}
\mathbb{E}[\theta|\capX, T, Y]=(M^TM)^{-1}M^TY\, .
\end{align*}
By assumption, $M$ is full rank, so $M^TM$ is invertible.

When ${\tilde{\boldsymbol{\beta}}_{\boldsymbol{Y}_d}}$ is constrained to equal to ${\tilde{\boldsymbol{\beta}}_{\boldsymbol{Y}_d}}^*$ for some component $d$, the equation instead becomes:
\begin{align*}
\mathbb{E}[\theta_{-d}|{\tilde{\boldsymbol{\beta}}_{\boldsymbol{Y}_d}}={\tilde{\boldsymbol{\beta}}_{\boldsymbol{Y}_d}}^*, \capX, T, Y]=({M^T_{-d}}M_{-d})^{-1}{M^T_{-d}}(Y - X_{{T=1}_d}{\tilde{\boldsymbol{\beta}}_{\boldsymbol{Y}_d}}^*)\, .
\end{align*}

We use the subscript $-d$ notation to indicate that we no longer estimate the component $d$. Here, $M_{-d} = [\capX_{{T=1}_{-d}}; \frac{\phi(\capX_{T=1}\tildebetaT)}{\Phi(\capX_{T=1}\tildebetaT)}]\in\mathbb{R}^{N_{T=1}\times d}$ and $\theta_{-d} = [{\tilde{\boldsymbol{\beta}}_{\boldsymbol{Y}_{-d}}}, \tilde{\rho}\tilde{\sigma}]\in\mathbb{R}^{d}$. Since $X_{{T=1}_d}$ is nonzero and $M$ is full rank, it follows that $\mathbb{E}[\theta_{-d}|{\tilde{\boldsymbol{\beta}}_{\boldsymbol{Y}_d}}={\tilde{\boldsymbol{\beta}}_{\boldsymbol{Y}_d}}^*, \capX, T, Y]$ is not constant in ${\tilde{\boldsymbol{\beta}}_{\boldsymbol{Y}_d}}^*$. Thus by Proposition \ref{identifiability}, constraining ${\tilde{\boldsymbol{\beta}}_{\boldsymbol{Y}_d}}$ reduces the variance of the parameters in $\theta_{-d}$ (${\tilde{\boldsymbol{\beta}}_{\boldsymbol{Y}_d'}}$ for $d'\neq d$ and $\tilde{\rho}\tilde{\sigma}$).

We will now show that both the prevalence and expertise constraints constrain a component of $\tildebetaY$. Assuming the standard condition that columns of $X$ are zero-mean except for an intercept column of ones, the prevalance constraint fixes 
\begin{align*}
    \mathbb{E}_Y[Y] &=  \bbE_Y[\bbE_X[\bbE_Z[Y | X,Z]]] \\
    &= \bbE_X[\bbE_Z[\capX^T\betaY + Z]] \\
    &= {\betaY}_0\, ,
\end{align*}
where ${\betaY}_0$ is the 0th index (intercept term) of $\betaY$. The expertise constraint also fixes a component of $\tildebetaY$ if $\tilde{\rho}$ and $\tilde{\sigma}^2$ are fixed. This can be shown by algebraically rearranging \eqref{eq:mapping} to yield
\begin{align*}
\tildebetaY &= \tildebetaT\frac{\tilde{\sigma}^2}{\tilde{\rho}}-\betadelta\frac{\tilde{\sigma}\sqrt{\tilde{\sigma}^2-\tilde{\rho}^2}}{\tilde{\rho}}\, .
\end{align*}
\end{proof}
While we derive our theoretical results for the Heckman correction model, in both our synthetic experiments (\S \ref{sec:synthetic_experiments}) and our real-world case study (\S \ref{sec:real_data_experiments}) we validate that our constraints improve parameter inference beyond the special Heckman case.
\begin{figure}
\vspace{-2em}
  \begin{center}
    \includegraphics[width=0.9\textwidth]{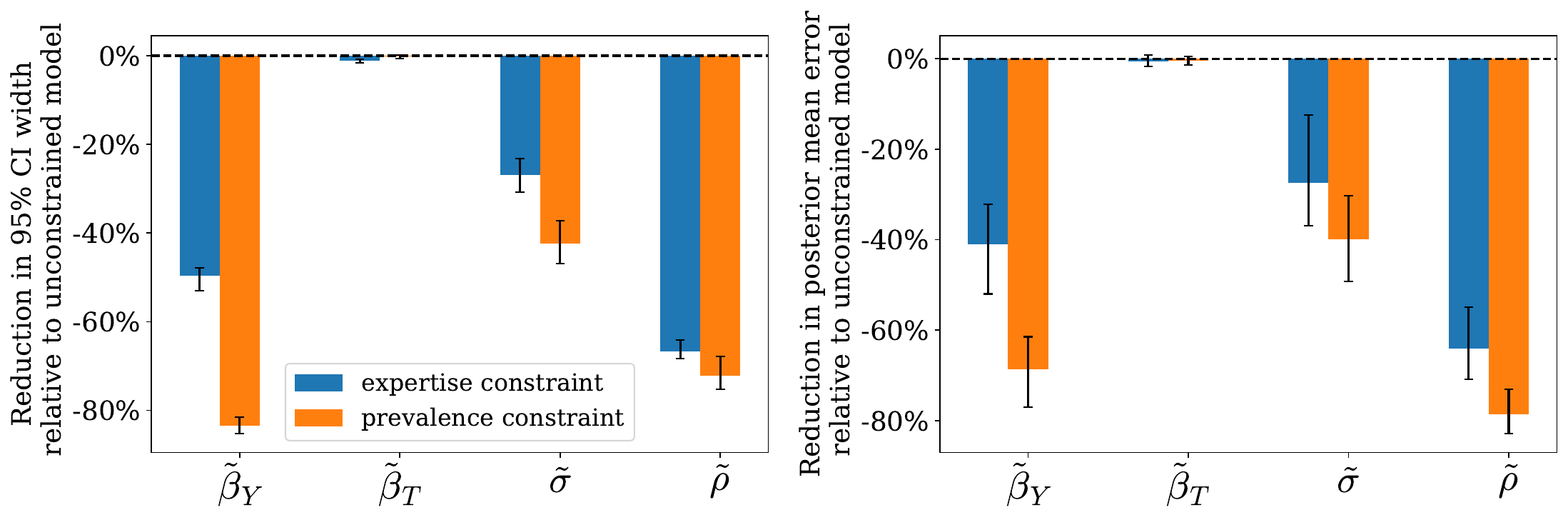}
  \end{center}
  \caption{Results using synthetic data from the Heckman model. The prevalence and expertise constraints each produce more precise and accurate inferences on this synthetic data. We plot the median across 200 synthetic datasets. Errorbars denote the bootstrapped 95\% confidence interval on the median.}
  \label{fig:heckman_plot}
  \vspace{-1.5em}
\end{figure}

\section{Derivation of the closed-form uniform unobservables model} 
\label{sec:appendix_uniform_model}
Conducting sampling for our general model described by \eqref{eq:DGP} is faster if the distribution of unobservables $f$ and link functions $h_Y$ and $h_T$ allow one to marginalize out $Z_i$ through closed-form integrals, since otherwise $Z_i$ must be sampled for each datapoint $i$, producing a high-dimensional latent variable which slows computation and convergence. Many distributions do not produce closed-form integrals when combined with a sigmoid or probit link function, which are two of the most commonly used links with binary variables.\footnote{Specifically, we search over the distributions in \cite{mclaughlin2001compendium}, combined with logit or probit links, and find that most combinations do not yield closed forms.} However, we \emph{can} derive closed forms for the special \emph{uniform unobservables} case described by \eqref{eq:uniform_model}.

\begin{figure}
\vspace{-2em}
  \begin{center}
    \includegraphics[width=0.9\textwidth]{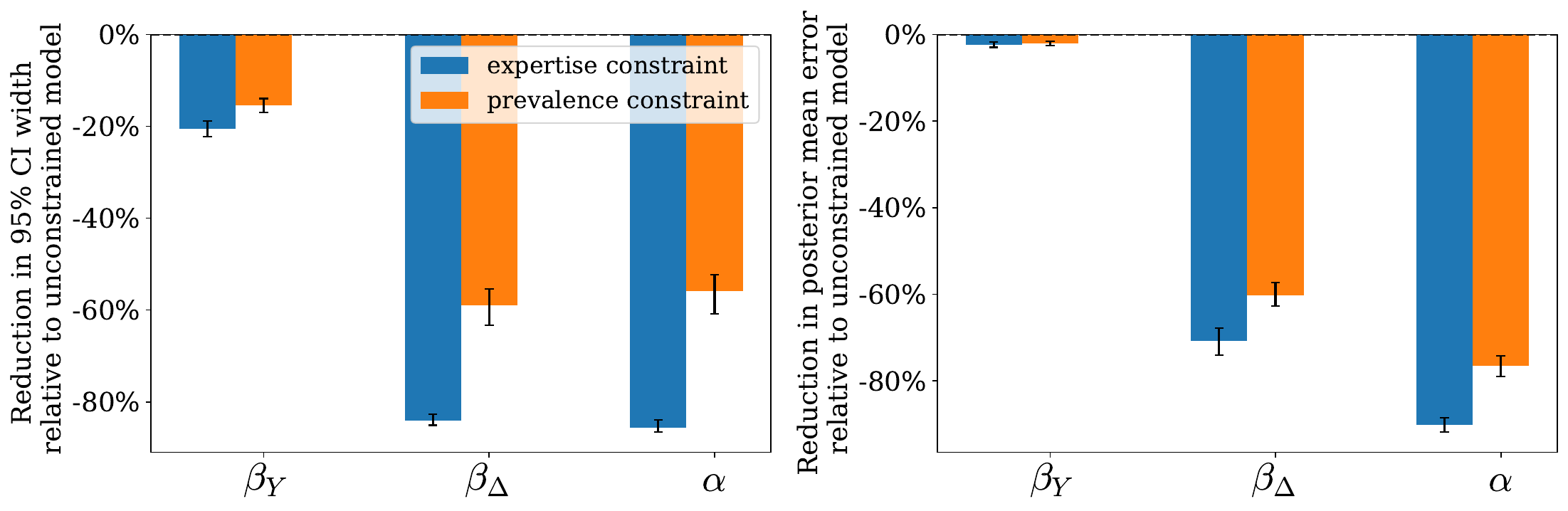}
  \end{center}
  \caption{Results using synthetic data from the Bernoulli-sigmoid model with normal unobservables and fixed $\sigma^2$. The prevalence and expertise constraints each produce more precise and accurate inferences on this synthetic data. We plot the median across 200 synthetic datasets. Errorbars denote the bootstrapped 95\% confidence interval on the median.}
  \label{fig:normal_fixed_sigma}
  \vspace{-1.5em}
\end{figure}

Below, we leave the $i$ subscript implicit to keep the notation concise. When computing the log likelihood of the data, to marginalize out $Z$, we must be able to derive closed forms for the following three integrals:
\begin{align*}
\begin{split}
p(Y=1, T=1 | X) &= \int_Z p(Y=1, T=1|X, Z)f(Z)dZ \\
p(Y=0, T=1 | X) &= \int_Z p(Y=0, T=1|X, Z)f(Z)dZ \\
p(T=0 | X) &= \int_Z p(T=0|X, Z)f(Z)dZ \, ,
\end{split}
\end{align*}
since the three possibilities for an individual datapoint are $\{Y=1, T=1\}$, $\{Y=0, T=1\}$, $\{T=0\}$. To implement the prevalence constraint (which fixes the $\bbE[Y]$), we also need a closed form for the following integral:
\begin{align*}
\begin{split}
p(Y=1 | X) &= \int_Z p(Y=1|X, Z)f(Z)dZ \, .
\end{split}
\end{align*}
For the uniform unobservables model with $\alpha = 1$, the four integrals have the following closed forms, where below we define $A=e^{\capX^T\betaT}$ and $B=e^{\capX^T\betaY}$:
\begin{align*}
\begin{split}
p(Y=1, T=1 | X) &= \frac{1}{\sigma \left(A-B\right)}
\bigg(\sigma \left(A-B\right) - A \log{\left(\left(B + 1\right) A^{-1} \right)}\\
&+ A \log{\left(\left(Be^{\sigma} + 1\right) A^{-1}e^{- \sigma} \right)}+  B \log{\left(\left(A + 1\right) A^{-1} \right)}\\
&- B \log{\left(\left(Ae^{\sigma} + 1\right) A^{-1}e^{- \sigma} \right)}\bigg) \\ 
p(Y=0, T=1|X) &= \frac{1}{\sigma \left(A - B\right)}\bigg(
\big(-\log{\left(\left(A + 1\right) A^{-1} \right)} + \log{\left(\left(B + 1\right) A^{-1} \right)} \\
&+ \log{\left(\left(Ae^{\sigma} + 1\right) A^{-1}e^{- \sigma} \right)} - \log{\left(\left(Be^{\sigma} + 1\right) A^{-1}e^{- \sigma} \right)}\big) A\bigg) \\ 
p(T=0 | X) &= \frac{\log{\left(1 + A^{-1} \right)} - \log{\left(A^{-1}e^{- \sigma} + 1 \right)}}{\sigma} \\
p(Y=1 | X) &= \frac{\sigma - \log{\left(1 + B^{-1}\right)} + \log{\left(B^{-1}e^{- \sigma} + 1 \right)}}{\sigma} \, .
\end{split}
\end{align*}
The integrals also have closed forms for other integer values of $\alpha$ (e.g., $\alpha=2$) allowing one to perform robustness checks with alternate model specifications (see Appendix \ref{sec:appendix_robustness_experiments} Figure \ref{fig:alpha}).

\section{Synthetic experiments}
\label{sec:appendix_syn_experiments}
We first validate that the prevalence and expertise constraints improve the precision and accuracy of parameter inference for the Heckman model described in \eqref{eq:properheckmanmodel}. We then extend beyond this special case and examine various Bernoulli-sigmoid instantiations of our general model in \eqref{eq:DGP}, which assume a binary outcome variable $Y$. With a binary outcome, models are known to be more challenging to fit: for example, one cannot simultaneously estimate both $\alpha$ and $\sigma^2$ (so we must fix either $\alpha$ or $\sigma^2$), and models fit without constraints may fail to recover the correct parameters~\citep{stata, vandeven1981demand,toomet2008sample}. We assess whether our proposed constraints improve model estimation even in this more challenging case. Specifically, we extend beyond the Heckman model to the following data generating settings: (i) uniform unobservables and fixed $\alpha$, (ii) normal unobservables and fixed $\sigma^2$; (iii) normal unobservables and fixed $\alpha$; and (iv) other more complex models. For the uniform model, we conduct experiments only with fixed $\alpha$ (not fixed $\sigma^2$) because, as discussed above, this allows us to marginalize out $Z$. 

In all models, to incorporate the prevalence constraint into the model, we add a quadratic penalty to the model penalizing it for inferences that produce an inferred $\mathbb{E}[Y]$ that deviates from the true $\mathbb{E}[Y]$. To incorporate the expertise constraint into the model, we set the model parameters $\boldsymbol{\beta}_{\Delta_d}$ to be equal to 0 for all dimensions $d$ to which the expertise constraint applies.
\subsection{Heckman model}
\label{sec:appendix_heckman_model}
We first conduct synthetic experiments using the Heckman model defined in \eqref{eq:properheckmanmodel}. This model is identifiable without any further constraints, thus we estimate parameters $\theta \triangleq (\tilde{\rho}, \tilde{\sigma}^2, \tildebetaT, \tildebetaY)$. 

In the simulation, we use 5000 datapoints; 5 features (including the intercept column of 1s); $\capX$, $\boldsymbol{\beta}_Y$, and $\boldsymbol{\beta}_T$ drawn from unit normal distributions; and $\sigma \sim \mathcal{N}(2, 0.1)$. We draw the intercept terms $\boldsymbol{\beta}_{Y_0}\sim \mathcal{N}(-2,0.1)$ and $\boldsymbol{\beta}_{T_0}\sim \mathcal{N}(2,0.1)$. We assume the expertise constraint applies to $\boldsymbol{\beta}_{\Delta_2}=\boldsymbol{\beta}_{\Delta_3}=\boldsymbol{\beta}_{\Delta_4}=0$. Thus, by rearranging \eqref{eq:mapping}, we fix $\tildebetaY = \tildebetaT\frac{\tilde{\sigma}^2}{\tilde{\rho}}$. When calculating the results for $\tildebetaT$ and $\tildebetaY$, we do not include the dimensions along which we assume expertise since these dimensions are assumed to be fixed for the model with the expertise constraint. 

We show results in Figure \ref{fig:heckman_plot}. Both constraints generally produce more precise and accurate inferences for all parameters relative to the unconstrained model. The only exception is $\tildebetaT$, for which both models produce equivalently accurate and precise inferences. This is consistent with our theoretical results, which do not imply that the precision of inference for $\tildebetaT$ should improve. 
\begin{figure}
\vspace{-2em}
  \begin{center}
    \includegraphics[width=0.9\textwidth]{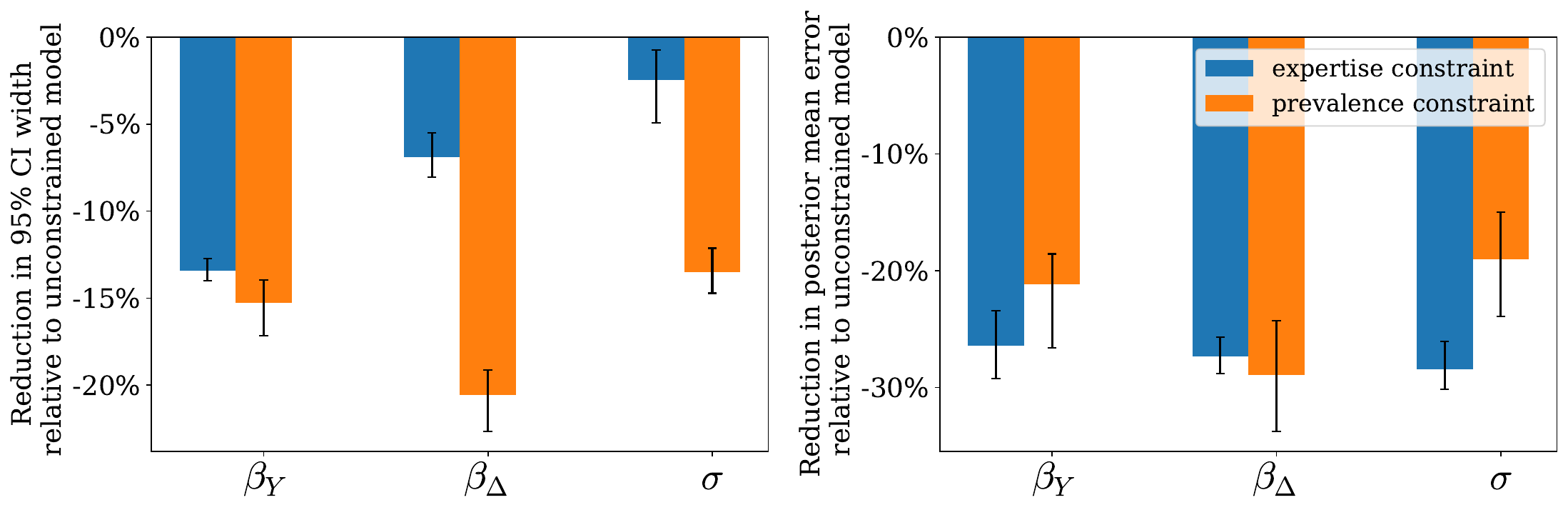}
  \end{center}
  \caption{Results using synthetic data from the Bernoulli-sigmoid model with normal unobservables and fixed $\alpha$. The prevalence and expertise constraints each produce more precise and accurate inferences on this synthetic data. We plot the median across 200 synthetic datasets. Errorbars denote the bootstrapped 95\% confidence interval on the median.}
  \label{fig:normal_fixed_alpha}
  \vspace{-1.5em}
\end{figure}
\label{sec:appendix_synthetic_experiments}
\subsection{Uniform unobservables model}
\label{sec:appendix_uniform_noise_model}
We now discuss our synthetic experiments using the Bernoulli-sigmoid model with uniform unobservables and $\alpha=1$ in \eqref{eq:uniform_model}. Our simulation parameters are similar to the Heckman model experiments. We use 5000 datapoints; 5 features (including the intercept column of 1s); $\capX$, $\boldsymbol{\beta}_Y$, and $\boldsymbol{\beta}_\Delta$ drawn from unit normal distributions; and $\sigma \sim \mathcal{N}(2, 0.1)$. We draw the intercept terms $\boldsymbol{\beta}_{Y_0}\sim \mathcal{N}(-2,0.1)$ and $\boldsymbol{\beta}_{\Delta_0}\sim \mathcal{N}(2,0.1)$ to approximately match $p(Y)$ and $p(T)$ in realistic medical settings, where disease prevalence is relatively low, but a large fraction of the population is tested because false negatives are more costly than false positives. We assume the expertise constraint applies to $\boldsymbol{\beta}_{\Delta_2}=\boldsymbol{\beta}_{\Delta_3}=\boldsymbol{\beta}_{\Delta_4}=0$. We show results in Figure \ref{fig:bar_plot}. When calculating the results for $\betadelta$, we do not include the dimensions along which we assume expertise since these dimensions are assumed to be fixed for the model with the expertise constraint. 
\begin{figure}
\vspace{-2em}
  \begin{center}
    \includegraphics[width=0.9\textwidth]{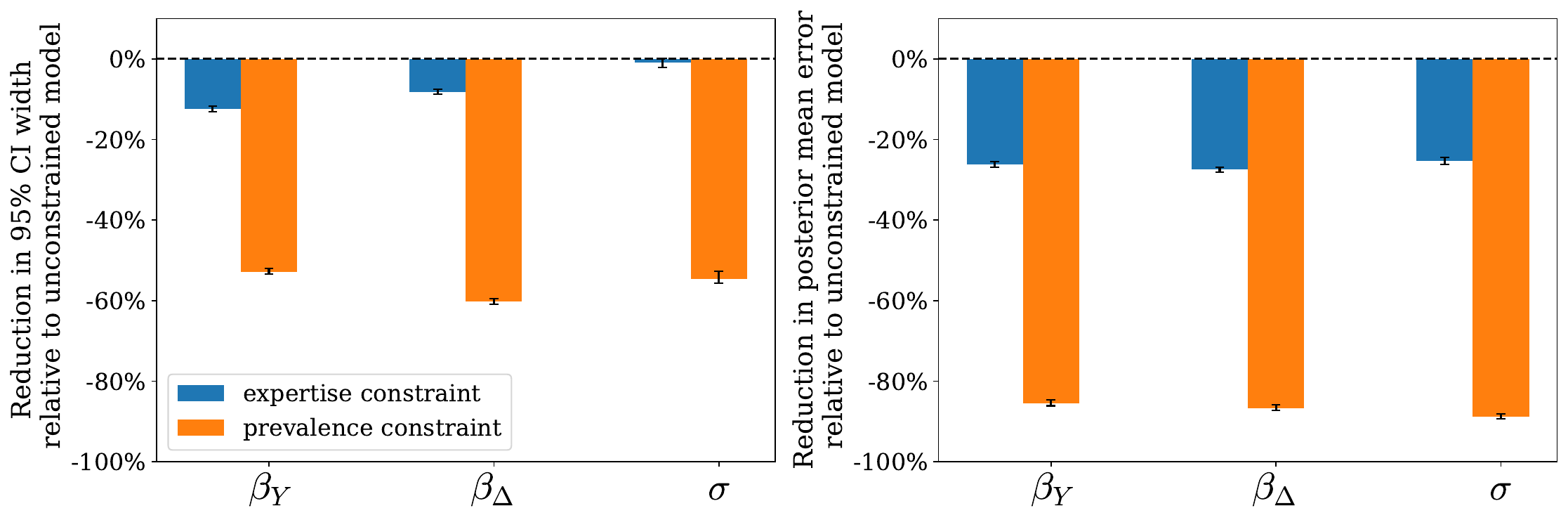}
  \end{center}
  \caption{The prevalence and expertise constraints still improve parameter inference when quadrupling the number of features relative to Figure \ref{fig:bar_plot}. Results are shown using synthetic data from the Bernoulli-sigmoid model with uniform unobservables. Both constraints produce more precise and accurate inferences on this synthetic data. We plot the median across 200 synthetic datasets. Errorbars denote the bootstrapped 95\% confidence interval on the median.}
  \label{fig:quadruple_features}
  \vspace{-1.5em}
\end{figure}
\subsection{Normal unobservables model}
\label{sec:appendix_normal_noise_model}
We also conduct synthetic experiments using the following Bernoulli-sigmoid model with normal unobservables: 
\begin{align}
\label{eq:normal_model}
\begin{split}
Z_i &\sim \mathcal{N}(0, \sigma^2)\\
r_i &= \capXi^T\betaY+Z_i\\
Y_i&\sim \text{Bernoulli}(\text{sigmoid}(r_i))\\
T_i&\sim \text{Bernoulli}(\text{sigmoid}(\alpha r_i+\capXi^T\betadelta))\, .
\end{split}
\end{align}
We show results for two cases: when $\sigma^2$ is fixed and when $\alpha$ is fixed. Because this distribution of unobservables does not allow us to marginalize out $Z$, it converges more slowly than the uniform unobservables model and we must use a smaller sample size for computational tractability.
\paragraph{Fixed $\sigma^2$:}
We use the same simulation parameters as the uniform model. We fix $\sigma^2=2$ and we draw $\alpha \sim N(1, 0.1)$. We show results in Figure \ref{fig:normal_fixed_sigma}. Both the prevalence and expertise constraints produce more precise and accurate inferences for all parameters relative to the unconstrained model.
\paragraph{Fixed $\alpha$:} We use the same simulation parameters as the uniform model, except we reduce the number of datapoints to 200. We fix $\alpha=1$ and we draw $\sigma^2 \sim N(2, 0.1)$. We show results in Figure \ref{fig:normal_fixed_alpha}. Both the prevalence and expertise constraints produce more precise and accurate inferences for all parameters relative to the unconstrained model. 

\subsection{More complex models}
To show our constraints are useful with more complex models, we ran two additional synthetic experiments on the Bernoulli-sigmoid model with uniform unobservables. First, we demonstrated applicability to higher-dimensional features. We show results in Figure \ref{fig:quadruple_features}. Even after quadrupling the number of features (which increases the runtime by a factor of three), both constraints still improve precision and accuracy. Secondly, we evaluate a more complex model with pairwise nonlinear interactions between features. We show results in Figure \ref{fig:pairwise_interactions}. Again both constraints generally improve precision and accuracy. We note our implementation relies on MCMC which is known to be less scalable than approaches like variational inference~\citep{wainwright2008graphical} and would likely not scale to very high-dimensional features. However, our approach does not intrinsically rely on MCMC, and incorporating more scalable estimation methods is a natural direction for future work.\footnote{We use the same simulation parameters as our standard uniform model experiments. We set the expertise constraint to apply to a random subset of 60\% of the features to match the standard uniform model experiments where expertise is assumed for 3 out of the 5 features.}
\section{UK Biobank data} 
\label{sec:appendix_ukbiobank}
\paragraph{Label processing:} In the UK Biobank (UKBB), each person's data is collected at their baseline visit. The time period we study is the 10 years preceding each person's baseline visit. $T_i \in \{0, 1\}$ denotes whether the person receives a mammogram in the 10 year period. $Y_i \in \{0, 1\}$ denotes whether the person receives a breast cancer diagnosis in the 10 year period. We verify that very few people in the dataset have $T = 0$ and $Y = 1$ (i.e., are diagnosed with no record of a test): $p(Y=1|T=0) = 0.0005$. We group these people with the untested $T=0$ population, since they did not receive a breast cancer test.
\begin{figure}
\vspace{-2em}
  \begin{center}
    \includegraphics[width=0.9\textwidth]{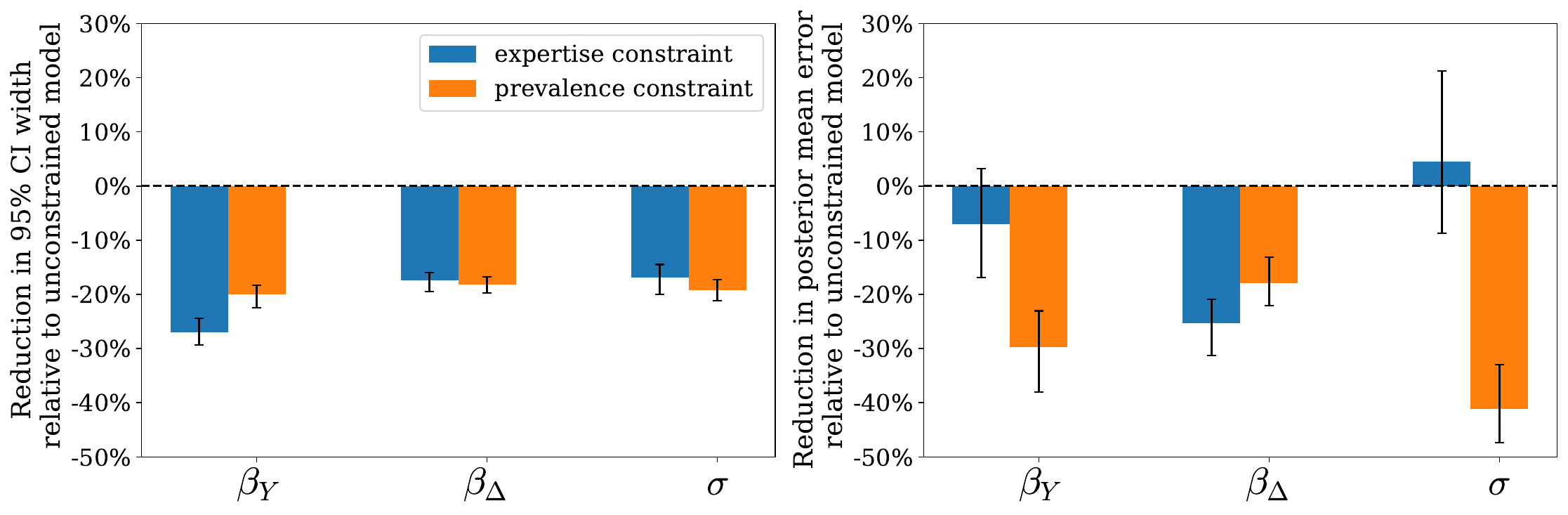}
  \end{center}
  \caption{The prevalence and expertise constraints still improve parameter inference even when using pairwise nonlinear interactions between features (rather than only linear terms, as shown in Figure \ref{fig:bar_plot}). Results are shown using synthetic data from the Bernoulli-sigmoid model with uniform unobservables. Both constraints generally produce more precise and accurate inferences on this synthetic data. We plot the median across 200 synthetic datasets. Errorbars denote the bootstrapped 95\% confidence interval on the median.}
  \label{fig:pairwise_interactions}
  \vspace{-1.5em}
\end{figure}
\paragraph{Feature processing:} 
We include features which satisfy two desiderata. First, we use features that previous work has found to be predictive of breast cancer~\citep{nih_risk_tool, cancer_risk_predictors2,yanes2020clinical}. Second, since features are designed to be used in predicting $T_i$ and $Y_i$, they must be measured prior to $T_i$ and $Y_i$ (i.e., at the beginning of the 10 year study period). Since the start of our 10 year study period occurs before the date of data collection, we choose features that are either largely time invariant (e.g. polygenic risk score) or that can be recalculated at different points in time (e.g. age). The full list of features that we include is: breast cancer polygenic risk score, previous biopsy procedure (based on OPCS4 operation codes), age at first period (menarche), height, Townsend deprivation index\footnote{The Townsend deprivation index is a measure of material deprivation that incorporates unemployment, non-car ownership, non-home ownership, and household overcrowding~\citep{townsend1988health}.}, race (White, Black/mixed Black, and Asian/mixed Asian), and age at the beginning of the study period ($<$35, 35-39, and 40-45). We normalize all features to have mean 0 and standard deviation 1.
\paragraph{Sample filtering:} We filtered our sample based on four conditions. (i) We removed everyone without data on whether or not they received breast cancer testing, which automatically removed all men because UKBB does not have any recorded data on breast cancer tests for men. (ii) We removed everyone who was missing data (e.g. responded ``do not know'') for breast cancer polygenic risk score; previous biopsy procedure; menarche; height; Townsend deprivation index; race; age; duration of moderate physical activity; cooked, salad, and raw vegetable intake; weight; use of the following medication: aspirin, ibuprofen, celebrex, and naproxen; family history of breast cancer; and previous detection of carcinoma in breast. (iii) We removed everyone who did not self report being of White, Black/mixed Black, or Asian/mixed Asian race. (iv) We remove patients who were diagnosed with breast cancer before the start of our 10 year study period, as is standard in previous work~\citep{zink2023race}. (v) We removed everyone above the age of 45 at the beginning of the observation period, since the purpose of our case study is to assess how the model performs in the presence of the distribution shift induced by the fact that young women tested for breast cancer are non-representative.\footnote{To confirm that our predictive performance remains good when looking at patients of all ages, we conduct an additional analysis fitting our model on a dataset without the age filter, but keeping the other filters. (For computational tractability, we downsample this dataset to approximately match the size of the original age-filtered dataset.) We fit this dataset using the same model as that used in our main analyses, but add features to capture the additional age categories (the full list of age categories are: $<$35, 35-39, 40-44, 45-49, 50-54, $\geq$55). We find that if anything, predictive performance when using the full cohort is better than when using only the younger cohort from our main analyses in \S \ref{sec:validating_model}. Specifically, the model’s quintile ratio is 4.6 among the tested population ($T_i=1$) and 7.0 among the untested population ($T_i=0$) that attended a follow-up visit.}
\paragraph{Model fitting:} We divide the data into train and test sets with a 70-30 split. We use the train set to fit our model. We use the test set to validate our risk predictions on the tested population ($T=1$). We validate our risk predictions for the $T=1$ population on a test set because the model is provided both $Y$ and $X$ for the train set, so using a test set replicates standard machine learning practice. We do not run the other validations (predicting risk among the $T=0$ population and inference of unobservables) on a test set because in all these cases the target variable is unseen by the model during training. Overfitting concerns are minimal because we use a large dataset and few features.
\paragraph{Inferred risk predicts breast cancer diagnoses among the untested population:} When verifying that inferred risk predicts future cancer diagnoses for the people who were untested ($T_i=0$) at the baseline, we use data from the three UKBB follow-up visits. We only consider the subset of people who attended at least one of the follow-up visits. We mark a person as having a future breast cancer diagnosis if they report receiving a breast cancer diagnosis at a date after their baseline visit. 
\paragraph{Inferred unobservables correlate with known unobservables:} We verify that across people, our inferred posterior mean of unobservables correlates with a true unobservable---whether the person has a family history of breast cancer. We define a family history of breast cancer as either the person's mother or sisters having breast cancer. We do not include this data as a feature because we cannot be sure that the measurement of family history precedes the measurement of $T_i$ and $Y_i$. This allows us to hold out this feature as a validation.
\paragraph{IRB:} Our institution's IRB determined that our research did not meet the regulatory definition of human subjects research. Therefore, no IRB approval or exemption was required.
\section{Additional experiments on cancer data}
Here we provide additional sets of experiments. We provide a comparison to various baseline models (Appendix \ref{sec:baselines}) and robustness experiments (Appendix \ref{sec:appendix_robustness_experiments}).
\subsection{Comparison to baseline models}
We provide comparisons to three different types of baseline models: (i) a model trained solely on the tested population, (ii) a model which assumes the untested group is negative, and (iii) other selective labels baselines. 
\paragraph{Comparison to models trained solely on the tested population:}
\label{sec:baselines}

The first baseline that we consider is a model which estimates $p(Y_i=1|T_i=1, X_i)$: i.e., a model which predicts outcomes without unobservables using only the tested population.\footnote{We estimate this using a logistic regression model, which is linear in the features. To confirm that non-linear methods yield similar results, we also fit random forest and gradient boosting classifiers. These methods achieve similar predictive performance to the linear model and they also predict an implausible age trend.} This is a widely used approach in medicine and other selective labels settings. In medicine, it has been used to predict COVID-19 test results among people who were tested~\citep{jehi2020individualizing,mcdonald2021derivation}; to predict hypertrophic cardiomyopathy among people who received gold-standard imaging tests~\citep{farahani2020explanatory}; and to predict discharge outcomes among people deemed ready for ICU discharge~\citep{mcwilliams2019towards}. It has also been used in the settings of policing~\citep{lakkaraju2017selective}, government inspections~\citep{lauferend}, and lending~\citep{bjorkegren2020behavior}. 

As shown in Figure \ref{fig:baselines}, we find that the model trained solely on the tested population learns that cancer risk first increases with age and then falls sharply, contradicting prior epidemiological and physiological evidence~\citep{cancer_risk_predictors2, cr_prevalence_stats,us2013us,campisi2013aging}. We see this same trend for a model fit without a prevalence constraint in \S \ref{sec:comparison_without_prevalence}. This indicates that these models do not predict plausible inferences consistent with prior work.
\begin{wrapfigure}{r}{0.5\textwidth}
\vspace{1.5em}
  \begin{center}
    \includegraphics[width=0.48\textwidth]{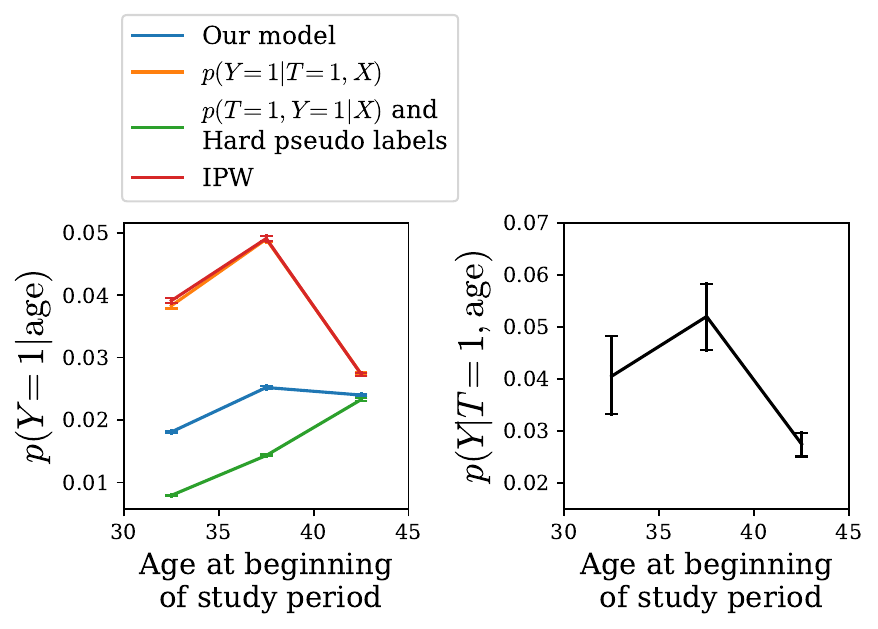}
  \end{center}
  \caption{We run three sets of baseline models: (i) models trained solely on the tested population, estimating $p(Y_i=1|T_i=1, X_i)$; (ii) models which treat the untested group as negative, estimating $p(T_i=1, Y_i=1|X_i)$; and (iii) other selective labels baselines (IPW and hard pseudo labels). Both IPW and the model estimating $p(Y_i=1|T_i=1, X_i)$ learn that cancer risk first increases and then decreases with age, contradicting prior literature. This implausible inference occurs because the tested population has the same misleading age trend (right plot). In contrast, our Bayesian model learns a more plausible age trend (left plot, blue line). Hard pseudo labels and the model estimating $p(T_i=1, Y_i=1|X_i)$ also learn plausible age trends, but they underperform our Bayesian model in predictive performance.}
  \label{fig:baselines}
  \vspace{-4em}
\end{wrapfigure}
\paragraph{Comparison to a model which treats the untested group as negative:}
We also consider a baseline model which treats the untested group as negative; this is equivalent to predicting $p(T_i=1, Y_i=1|X_i)$, an approach used in prior selective labels work \citep{shen2021artificial, ko2020detection, rastogi2023learn}. We find that, though this baseline no longer learns an implausible age trend, it underperforms our model in terms of AUC (AUC is 0.60 on the tested population vs. 0.63 for our model; AUC is 0.60 on the untested population vs. 0.63 for our model) and quintile ratio (quintile ratio on the tested population is 2.4 vs. 3.3 for our model; quintile ratio for both models is 2.5 on the untested population). This baseline is a special case of our model with the prevalence constraint set to $p(Y=1|T=0) = 0$, an implausibly low prevalence constraint. In light of this, it makes sense that this baseline learns a more plausible age trend, but underperforms our model overall. 

\paragraph{Comparison to other selective labels baselines:}
We also consider two other common selective labels baselines \citep{rastogi2023learn}. First, we predict hard pseudo labels for the untested population \citep{Lee2013PseudoLabelT}: i.e., we train a classifier on the tested population and use its outputs as pseudo labels for the untested population. Due to the low prevalence of breast cancer in our dataset, the pseudo labels are all $Y_i=0$, so this model is equivalent to treating the untested group as negative and similarly underperforms our model in predictive performance. Second, we use inverse propensity weighting (IPW) \citep{shimodaira2000improving}: i.e., we train a classifier on the tested population but reweight each sample by the inverse propensity weight $\frac{1}{p(T_i=1|X_i)}$.\footnote{We clip $p(T_i=1|X_i)$  to be between [0.05, 0.95], consistent with previous work.} As shown in Figure \ref{fig:baselines}, this baseline also learns the implausible age trend that cancer risk first increases and then decreases with age: this is because merely reweighting the sample, without encoding that the untested patients are less likely to have cancer via a prevalence constraint, is insufficient to correct the misleading age trend.

\subsection{Robustness checks for the breast cancer case study}
\label{sec:appendix_robustness_experiments}
Our primary breast cancer results (\S \ref{sec:real_data_experiments}) are computed using the Bernoulli-sigmoid model in \eqref{eq:uniform_model}. In this model, unobservables are drawn from a uniform distribution, $\alpha$ is set to $1$, and the prevalence constraint is set to $p(Y=1) = 0.02$ based on previously reported breast cancer incidence statistics~\citep{cr_prevalence_stats}. In order to assess the robustness of our results, we show that they remain consistent when altering all three of these aspects to plausible alternative specifications.
\begin{figure}[]
\vspace{-2em}
     \centering
     \begin{subfigure}[b]{0.59\textwidth}
         \centering
         \includegraphics[height=0.35\textheight]{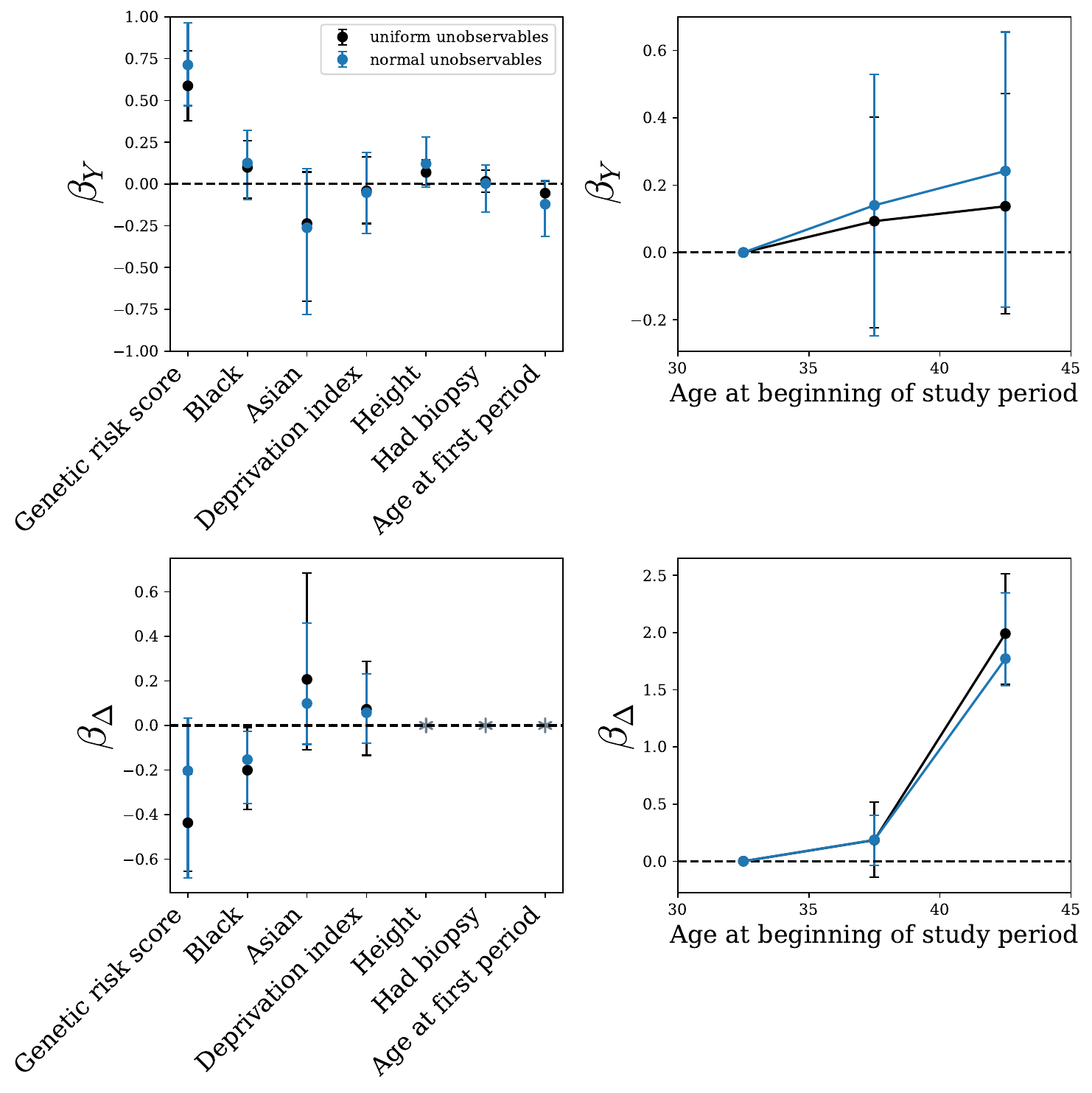}
         \caption{}
         \label{fig:noise_robustness}
     \end{subfigure}
     \hfill
    \begin{subfigure}[b]{0.4\textwidth}
         \centering
         \includegraphics[height=0.35\textheight]{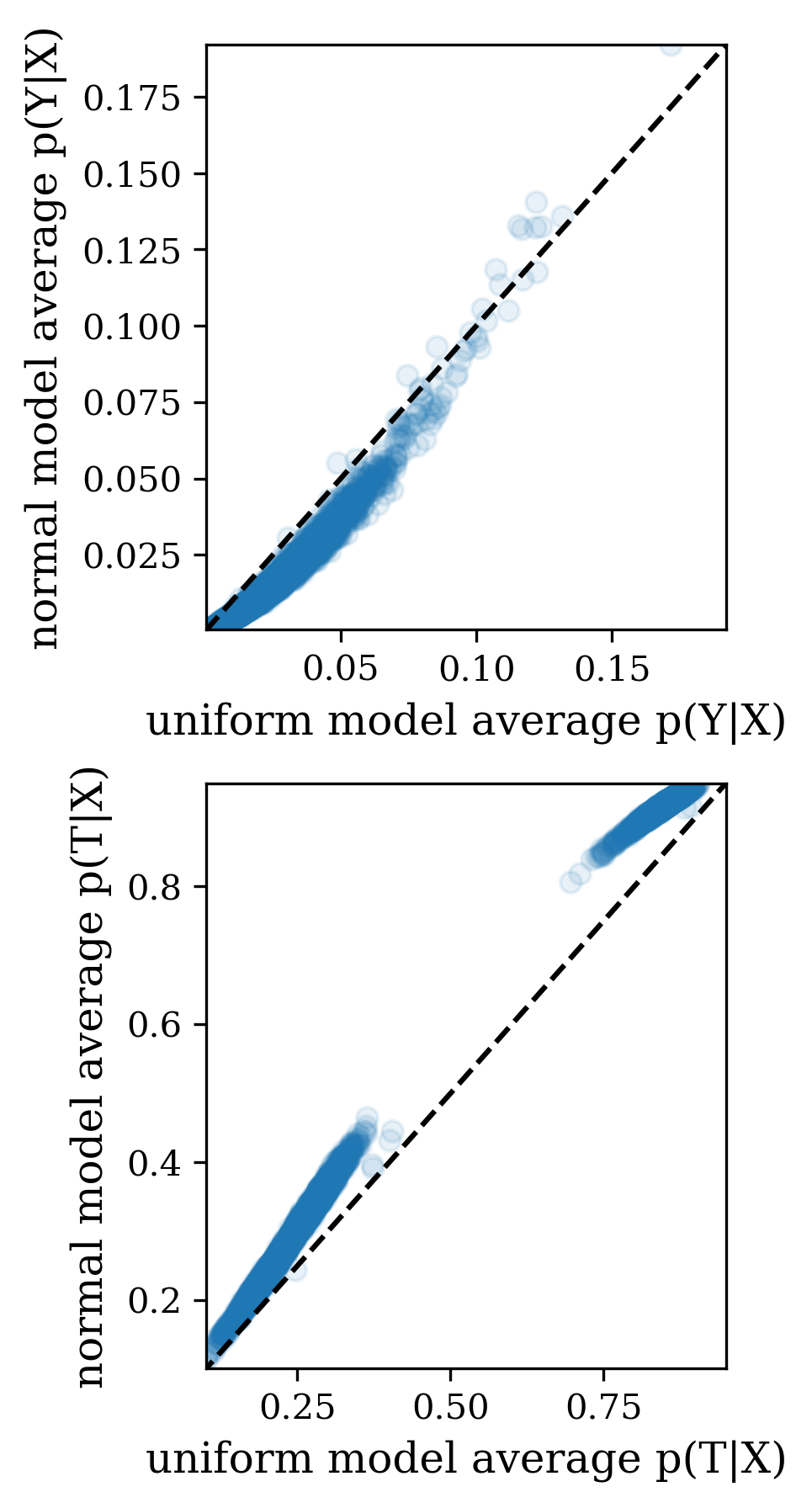}
         \caption{}
         \label{fig:noise_comparison}
     \end{subfigure}
     \caption{We compare the results from the uniform unobservable model in \eqref{eq:uniform_model} (black) and the normal unobservable model in \eqref{eq:normal_model} (blue). Figure \ref{fig:noise_robustness}: The estimated $\betaY$ and $\betadelta$ coefficients remain similar for both models, with similar trends in the point estimates and overlapping confidence intervals. Figure \ref{fig:noise_comparison}: Both models predict highly correlated values for $p(Y_i|X_i)$ and $p(T_i|X_i)$. Perfect correlation is represented by the dashed line.}
     \label{fig:noise}
     \vspace{-1em}
\end{figure}
\begin{figure}
\vspace{-2em}
     \centering
     \begin{subfigure}[b]{0.59\textwidth}
         \centering
         \includegraphics[height=0.35\textheight]{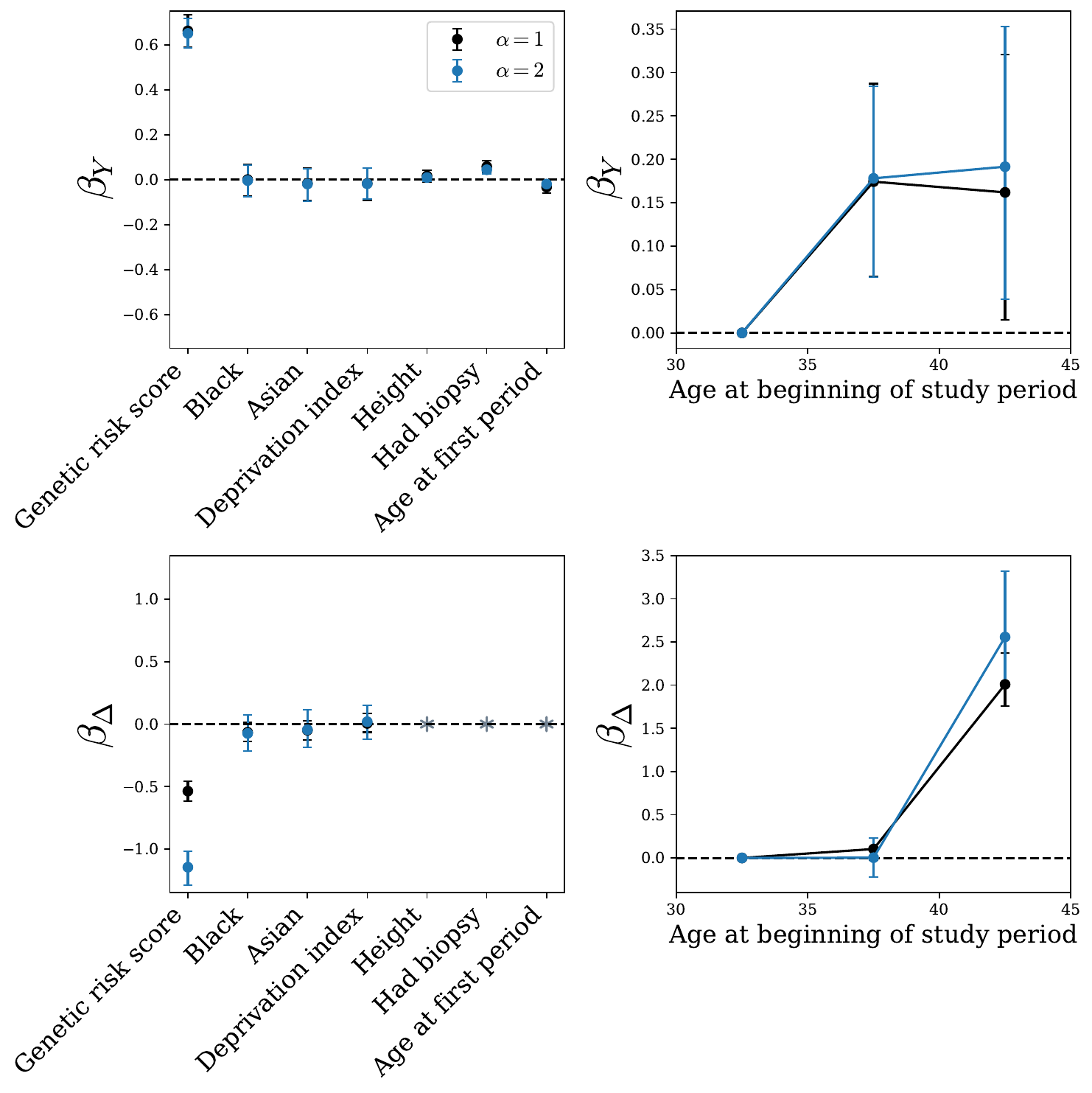}
         \caption{}
         \label{fig:alpha_robustness}
     \end{subfigure}
     \hfill
    \begin{subfigure}[b]{0.4\textwidth}
         \centering
         \includegraphics[height=0.35\textheight]{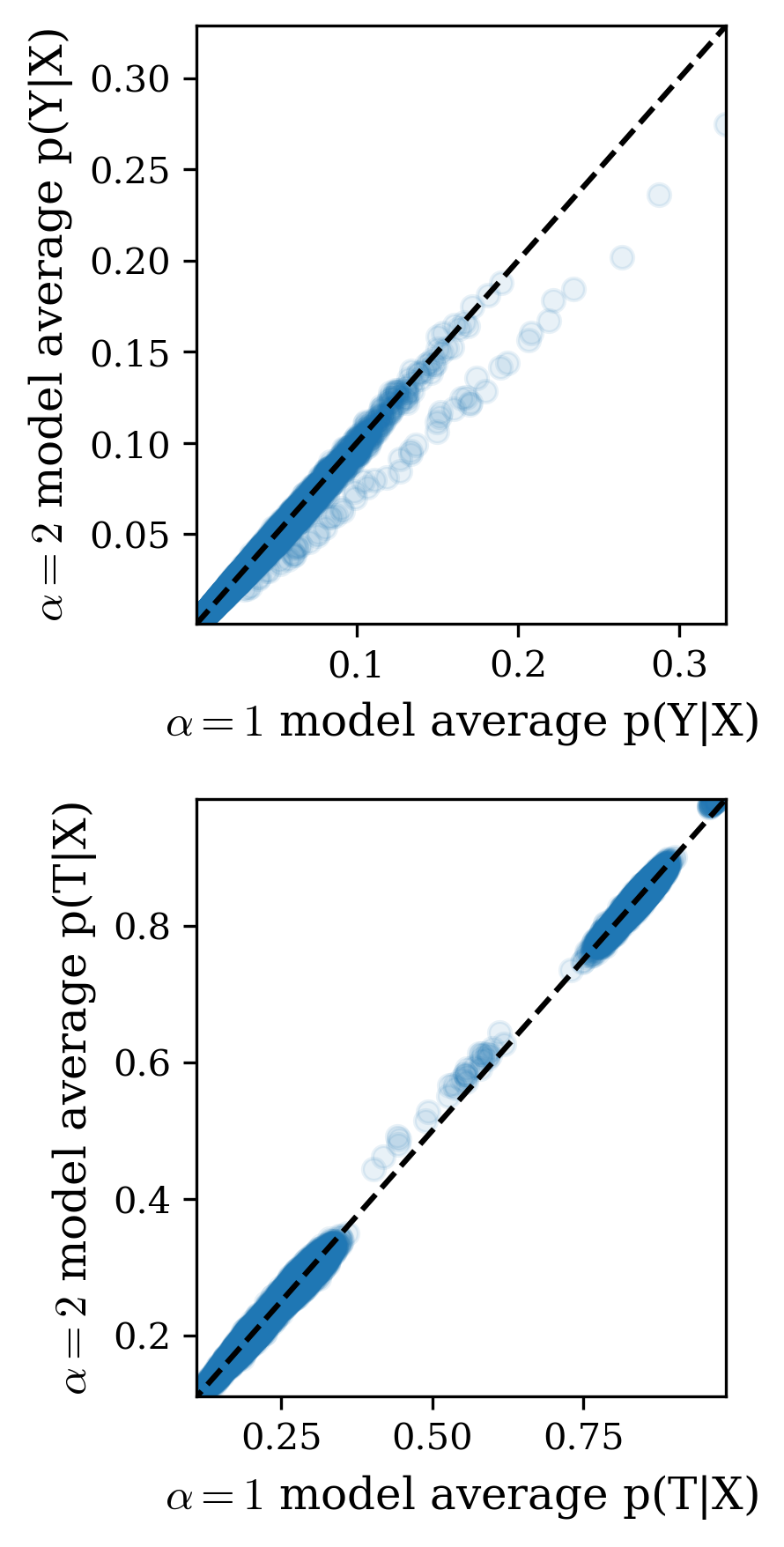}
         \caption{}
         \label{fig:alpha_comparison}
     \end{subfigure}
     \caption{We compare the results from the uniform unobservable model with $\alpha=1$ (black) and $\alpha=2$ (blue). Figure \ref{fig:alpha_robustness}: The inferred $\betaY$ and $\betadelta$ coefficients are generally very similar, with similar trends in the point estimates and overlapping confidence intervals. The only exception is the estimate of $\betadelta$ for genetic risk, which is explained by the fact that the prediction of $\betadelta$ depends on the value of $\alpha$. Figure \ref{fig:alpha_comparison}: Both models predict highly correlated values for $p(Y_i|X_i)$ and $p(T_i|X_i)$. Perfect correlation is represented by the dashed line.}
     \label{fig:alpha}
     \vspace{-0em}
\end{figure}
\paragraph{Consistency across different distributions of unobservables:}
We compare the uniform unobservables model (\eqref{eq:uniform_model}) to the normal unobservables model (\eqref{eq:normal_model}). As described in Appendix \ref{sec:appendix_syn_experiments}, the normal unobservables model does not allow us to marginalize out $Z_i$ and thus converges more slowly. Hence, for computational tractability, we run the model on a random subset of $\frac{1}{8}$ of the full dataset. In Figure \ref{fig:noise_robustness}, we see that the estimated coefficients for both models remain similar, with similar trends in the point estimates and overlapping confidence intervals. Figure \ref{fig:noise_comparison} shows that the inferred values of $p(Y_i|X_i)$ and $p(T_i|X_i)$ for each data point also remain correlated, indicating that the models infer similar testing probabilities and disease risks for each person.
\begin{figure}
\vspace{-2em}
     \centering
     \begin{subfigure}[b]{0.59\textwidth}
         \centering
         \includegraphics[height=0.35\textheight]{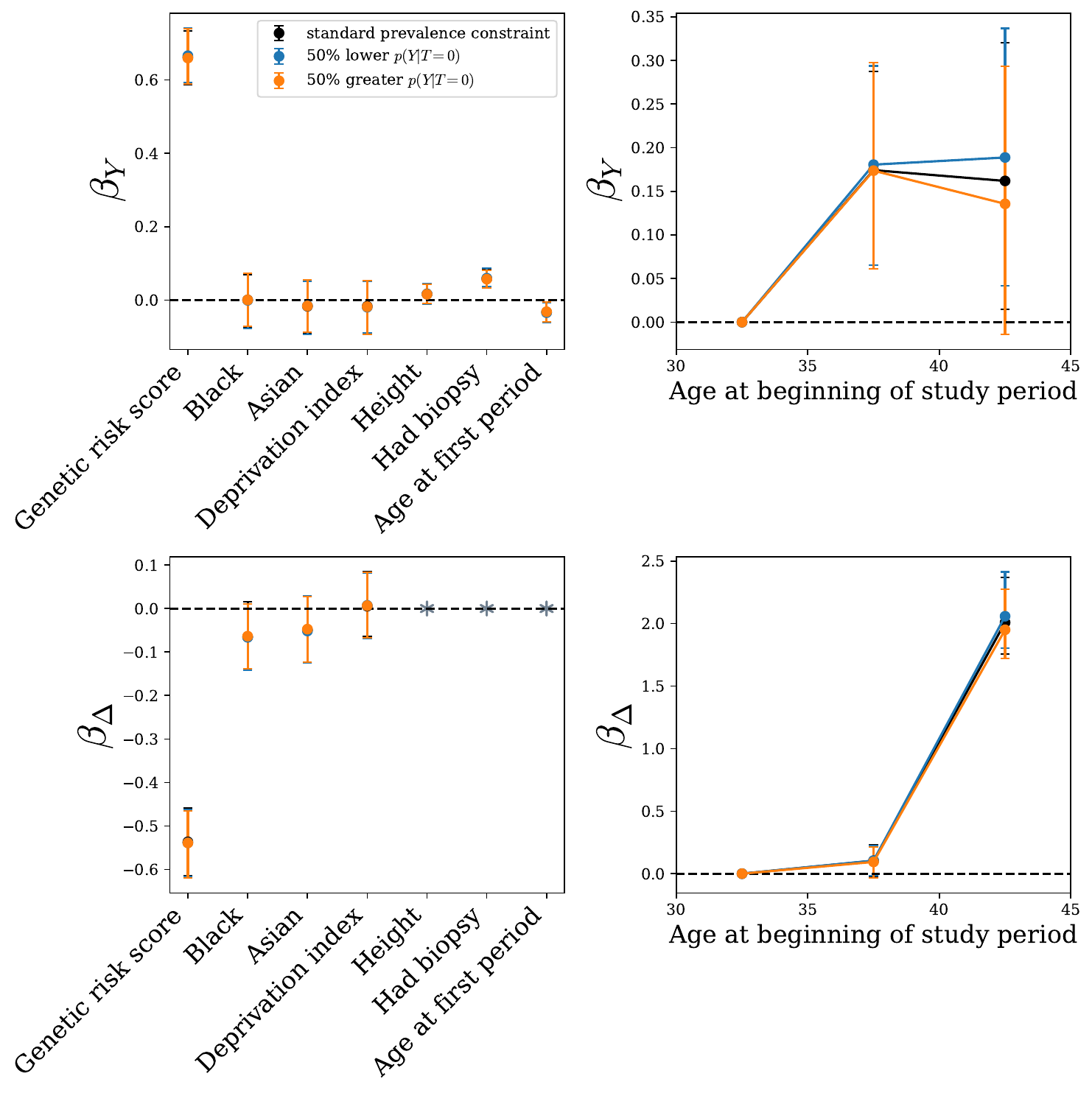}
         \caption{}
         \label{fig:prevalence_robustness}
     \end{subfigure}
     \hfill
    \begin{subfigure}[b]{0.4\textwidth}
         \centering
         \includegraphics[height=0.35\textheight]{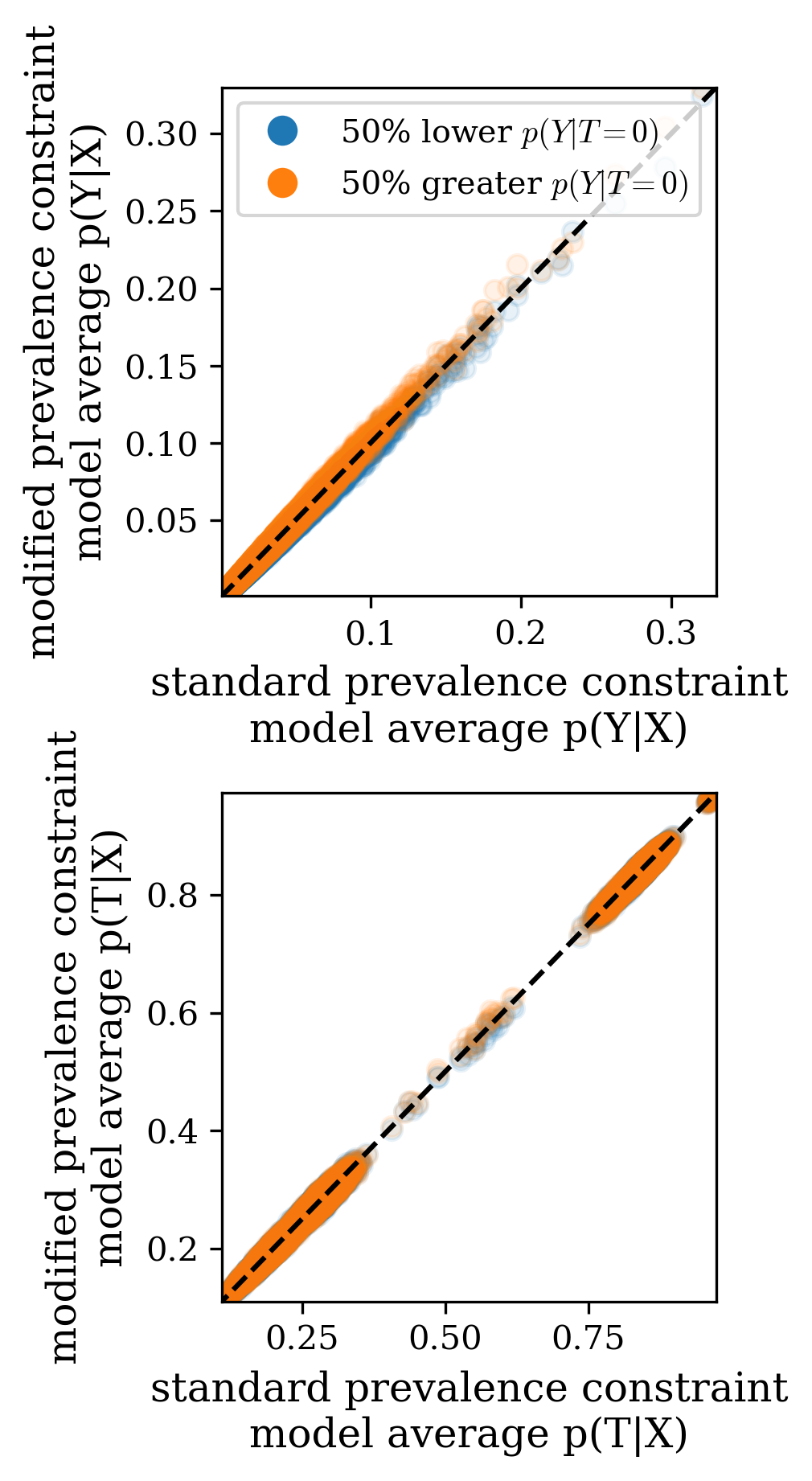}
         \caption{}
         \label{fig:prevalence_comparison}
     \end{subfigure}
     \caption{We compare the results from the uniform unobservables model with a prevalence constraint of $\mathbb{E}[Y]=0.02$ informed by cancer statistics~\citep{cr_prevalence_stats} (black), a prevalence constraint which corresponds to 50\% less of the untested population having the disease (blue), and a prevalence constraint which corresponds to 50\% more of the untested population having the disease (orange). Figure \ref{fig:prevalence_robustness}: The predictions for all three models are similar as seen by the similar trends in the point estimates and overlapping confidence intervals. Figure \ref{fig:prevalence_comparison}: All three models predict correlated values for $p(Y_i|X_i)$ and $p(T_i|X_i)$. Perfect correlation is represented by the dashed line.}
     \label{fig:prevalence}
     \vspace{-2em}
\end{figure}
\paragraph{Consistency across different $\alpha$:}
We compare the uniform unobservables model with $\alpha=1$ to a uniform unobservables model with $\alpha=2$. In Figure \ref{fig:alpha_robustness}, we see that the inferred coefficients for both models are generally very similar, with similar trends in the point estimates and overlapping confidence intervals. The only exception is $\betadelta$ for the genetic risk score. While both models find a negative $\betadelta$ for the genetic risk score, indicating genetic information is underused, the coefficient is less negative when $\alpha = 1$. This difference occurs because altering $\alpha$ changes the assumed relationship between the risk score and the testing probability under purely risk-based allocation, and thus changes the estimated deviations from this relationship (which $\betadelta$ captures). Past work also makes assumptions about the relationship between risk and human decision-making~\citep{pierson2020assessing,simoiu2017problem,pierson2018fast,pierson2020large}. We can restrict the plausible values of $\alpha$, and thus $\betadelta$, using the following approaches: (i) restricting $\alpha$ to a range of reasonable values based on domain knowledge; (ii) setting $\alpha$ to the value predicted by a model with $\sigma^2$ pinned; or (iii) fitting $\alpha$ and $\sigma^2$ in a model with non-binary $Y_i$ outcomes when both parameters can be simultaneously identified. 

To confirm model consistency, we compare the inferred values of $p(Y_i|X_i)$ and $p(T_i|X_i)$ for each data point. As shown in Figure \ref{fig:alpha_comparison}, these estimates remain highly correlated across both models, indicating that the models infer similar testing probabilities and disease risks for each person.
\paragraph{Consistency across different prevalence constraints:}
The prevalence constraint fixes the estimate of $p(Y=1)$. Because the proportion of tested individuals who have the disease, $p(Y=1|T=1)$, is known from the observed data, fixing $p(Y=1)$ is equivalent to fixing the proportion of \emph{untested} individuals with the disease, $p(Y=1|T=0)$. For the model in \S \ref{sec:real_data_experiments}, we set the prevalence constraint to $0.02$ based on cancer incidence statistics~\citep{cr_prevalence_stats}. However, disease prevalence may not be exactly known~\citep{manski2021estimating, manski2020bounding, mullahy2021embracing}. To check the robustness of our results to plausible variations in the prevalence constraint, we compare to two other prevalence constraints that correspond to 50\% lower and 50\% higher values of $p(Y=1|T=0)$.\footnote{While our results are robust to significant alterations of the prevalence constraint, we do note that if the model is run with a wildly misspecified prevalence constraint --- for example, $p(Y=1|T=0) = 0$ --- it could produce incorrect results. To avoid this issue, our Bayesian framework also accommodates approximate constraints, if the prevalence is only approximately known.} This yields overall prevalence constraints of $\mathbb{E}[Y]\approx0.018$ and $0.022$, respectively.
In Figure \ref{fig:prevalence_robustness}, we compare the $\betaY$ and $\betadelta$ coefficients for these three different prevalence constraints. Across all three models, the estimated coefficients remain similar, with similar trends in the point estimates and overlapping confidence intervals. In particular, the age trends also remain similar in all three models, in contrast to the model fit without a prevalence constraint (\S \ref{sec:comparison_without_prevalence}). In Figure \ref{fig:prevalence_comparison}, we compare the inferred values of $p(Y_i|X_i)$ and $p(T_i|X_i)$ for each data point and confirm that these estimates remain highly correlated across all three models, indicating that the models infer very similar testing probabilities and disease risks for each person.

\end{document}